\documentclass{article}
\usepackage{microtype}
\usepackage{graphicx}
\usepackage{booktabs} 

\usepackage[most]{tcolorbox}

\usepackage[framemethod=TikZ]{mdframed}
\mdfsetup{skipabove=6pt,
innertopmargin=5pt,skipbelow=5pt,innerrightmargin=0pt,
roundcorner=5pt}

\usepackage{eucal}

\usepackage{hyperref}

\usepackage{algorithmic}
\usepackage[ruled,vlined]{algorithm2e}
\usepackage[shortlabels]{enumitem}
\setlist[enumerate]{nosep}
\usepackage[accepted]{icml2026}

\usepackage{amsmath}
\usepackage{amssymb}
\usepackage{mathtools}
\usepackage{amsthm}
\usepackage{bm}
\usepackage{bbm}
\usepackage{mathrsfs}
\usepackage[capitalize,noabbrev]{cleveref}
\usepackage{multirow}
\usepackage{makecell}
\usepackage{subcaption}

\theoremstyle{plain}
\newtheorem{theorem}{Theorem}[section]

\theoremstyle{definition}

\theoremstyle{remark}
\newtheorem{remark}[theorem]{Remark}

\SetKwInput{KwInput}{Input}
\SetKwInput{KwOutput}{Output}
\usepackage{thm-restate}
\usepackage{multicol}
\usepackage{pifont}
\usepackage{tcolorbox}
\usepackage{xcolor}
\usepackage{colortbl}
\usepackage{changepage}
\definecolor{myhighlight}{RGB}{220,240,255}
\usepackage[textsize=tiny]{todonotes}

\tcbset{
  example/.style={
    colback=gray!10!white,
    colframe=gray!60!black,
    fonttitle=\bfseries,
    coltitle=white,
    boxrule=1pt,
    arc=4pt,
    left=6pt,
    right=6pt,
    top=6pt,
    bottom=6pt,
    title=Takeaway,
  }
}

\icmltitlerunning{Generalizable Predictive Prompt Selection}

\begin{document}

\twocolumn[
\icmltitle{Small Generalizable Prompt Predictive Models Can \\Steer Efficient RL Post-Training of Large Reasoning Models}



\icmlsetsymbol{equal}{*}
\icmlsetsymbol{intern}{$\dagger$}
\begin{icmlauthorlist}
    \icmlauthor{Yun Qu}{intern,yyy,comp}
    \icmlauthor{Qi Wang}{yyy}
    \icmlauthor{Yixiu Mao}{yyy}
    \icmlauthor{Heming Zou}{intern,yyy,comp}
    \icmlauthor{Yuhang Jiang}{yyy}
    \icmlauthor{Weijie Liu}{comp}
    \icmlauthor{Clive Bai}{comp}
    \icmlauthor{Kai Yang}{comp}
    \icmlauthor{Yangkun Chen}{comp}
    \icmlauthor{Saiyong Yang}{comp}
    \icmlauthor{Xiangyang Ji}{yyy}
\end{icmlauthorlist}

\icmlaffiliation{yyy}{Department of Automation, Tsinghua University, Beijing, China}
\icmlaffiliation{comp}{LLM Department, Tencent, Beijing, China}

\icmlcorrespondingauthor{Qi Wang}{cheemswang@mail.tsinghua.edu.cn}
\icmlcorrespondingauthor{Saiyong Yang}{stevesyang@tencent.com}
\icmlcorrespondingauthor{Xiangyang Ji}{xyji@tsinghua.edu.cn}

\icmlkeywords{Machine Learning, ICML}

\vskip 0.3in
]



\printAffiliationsAndNotice{$\dagger$ Work completed during an internship at Tencent.}  

\begin{abstract}
Reinforcement learning enhances the reasoning capabilities of large language models but often involves high computational costs due to rollout-intensive optimization. 
Online prompt selection presents a plausible solution by prioritizing informative prompts to improve training efficiency. 
However, current methods either depend on costly, exact evaluations or construct prompt-specific predictive models lacking generalization across prompts. 
This study introduces Generalizable Predictive Prompt Selection (\textsf{GPS}), which performs Bayesian inference towards prompt difficulty using a lightweight generative model trained on the shared optimization history. 
Intermediate-difficulty prioritization and history-anchored diversity are incorporated into the batch acquisition principle to select informative prompt batches. 
The small predictive model also generalizes at test-time for efficient computational allocation.
Experiments across varied reasoning benchmarks indicate \textsf{GPS}'s substantial improvements in training efficiency, final performance, and test-time efficiency over superior baseline methods.
The code is available at \url{https://github.com/thu-rllab/GPS}.
\end{abstract}

\section{Introduction}

Recent advances have witnessed the impressive reasoning abilities of large language models (LLMs) on complex problem-solving, such as mathematics and programming~\citep{shao2024deepseekmath,luo2025deepcoder}.
Behind the remarkable success is a key post-training technology, reinforcement learning with verifiable rewards (RLVR)~\citep{jaech2024openai, guo2025deepseek}, which prompts the LLM to generate long Chain-of-Thoughts (CoTs)~\citep{wei2022chain} and performs policy optimization with verified rewards.
Despite its success, RLVR is widely known to be expensive in computations and memory usage, as it requires substantial rollouts for LLM policy evaluation and updates~\citep{zheng2025greso, lin2025cppo}.

\textbf{Pressing demands of online prompt selection.}
In LLMs training, data curation is an indispensable process, as the prompt chosen to optimize directly influences convergence and efficiency.
In particular, the prompts in RLVR contribute unevenly: overly easy or extremely hard prompts tend to yield vanishing gradients~\citep{yu2025dapo}, while intermediate-difficulty prompts are more informative to update~\citep{zeng2025cures, chen2025self}.
Consequently, the naive use of random prompt sampling can be ineffective and prolong the learning process.
To this end, several researchers have shifted focus to online prompt selection~\citep{yu2025dapo, zhang2025speedrl, bae2025online}, which evaluates prompt difficulty in a larger candidate set via additional rollouts and selects the subset for RLVR.
While such a strategy improves performance and reduces the number of training steps, it incurs substantial computational overhead from additional rollouts~\citep{zheng2025greso}.

\textbf{Opportunities and challenges of prompt difficulty prediction.}
Recognizing the high computational cost of prompt evaluation, recent works~\citep{zheng2025greso, zhang2025srpo, qu2025prompt} propose to predict prompt difficulty from accumulated history and employ selective rollouts to improve training efficiency.
However, existing approaches to difficulty estimation, e.g., MoPPS~\citep{qu2025prompt}, rely on a prompt predictive model (PPM) that tracks difficulty beliefs independently for arbitrary prompt, imposing several limitations.
Note that prompt-specific modeling isolates samples during belief updates; hence, only frequently optimized prompts enable reliable difficulty estimates. 
Other prompts either suffer from stale information or receive few belief updates at all, considering that these prompt-specific PPMs cannot generalize across prompts.
Moreover, they neglect the batch-level structure in prompt selection, leading to redundant training signals within selected batches and limited exploration of the prompt space.
All of these raise the research question: 

\textit{Can we design a lightweight yet generalizable PPM, together with more effective subset acquisition strategies, to achieve efficient optimization?}

\textbf{Small generalizable PPMs and comprehensive batch acquisition boost computational efficiency.}
This work positively answers the research question and proposes generalizable predictive prompt selection~(\textsf{GPS}), which designs a generalizable PPM and crafts a principled batch acquisition strategy in RLVR.
(i) Unlike isolated prompt difficulty modeling~\citep{qu2025prompt, zhang2025srpo, zheng2025greso}, we construct a lightweight generative PPM that exploits the entire optimization histories and shares experience across prompts.
This way derives the generalizable difficulty prior and improves the accuracy of the prompt difficulty estimate through temporal extrapolation.
(ii) Armed with the generalizable PPM, we incorporate the intermediate difficulty prioritization and history-anchored diversity principles into the acquisition criteria for batch selection.
This design aims to reduce sampling redundancy and preserve prompt coverage.
In addition, the proposed generalizable PPM reserves the potential of guiding efficient test-time computation allocation.

\textbf{Contributions and Empirical Findings.}
The primary contribution of this work is threefold:
\begin{enumerate}
    \item We develop a generalizable PPM that leverages the shared optimization history to generalize difficulty prediction across prompts at negligible computational cost for RLVR.
    \vspace{2pt}
    \item The proposed prompt batch acquisition strategy combines difficulty guidance with history-aware diversity to reduce redundancy and improve prompt coverage.
    \vspace{2pt}

    \item The generalizable PPM after RL post-training supports the test-time computation allocation, improving performance under compute budget constraints.
\end{enumerate}

Extensive experiments on large-scale mathematical and logical reasoning benchmarks across diverse LLM backbones witness several advantages of our scheme in both training and test time:
(i) \textsf{GPS} reliably predicts prompt difficulty and improves the quality of the selected prompt batch.
(ii) \textsf{GPS} accelerates RL post-training, delivering up to $\bm{2.0\times}$ speedup over uniform sampling while consistently improving performance.
Moreover, \textsf{GPS} matches or outperforms evaluation-based selection with up to $\bm{69\%}$ lower rollout cost.
(iii) At test time, the learned PPM enables effective computation reallocation, reducing inference cost by up to $\bm{36.4\%}$ without performance loss or improving accuracy by up to $\bm{3.2\%}$ under fixed budgets.

\section{Preliminaries}

\subsection{Notations}
We consider reasoning tasks in which prompts $\tau$ take the form of mathematical or logical problems, e.g., “If $991+993+995+997+999 = 5000 - N$, then $N$?” from Deepscaler~\citep{luo2025deepscaler}.
Let $\mathcal{T}=\{\tau_i\}_{i=1}^N$ denote the full prompt pool.
At iteration $t$, the LLM $\pi_{\bm{\theta}_t}$ serves as the policy.
In each iteration, a batch of prompts $\mathcal{T}_t^{\mathcal{B}}=\{\tau_{t,i}\}_{i=1}^{\mathcal{B}}\subset\mathcal{T}$ is selected to generate rollouts, i.e., long CoT responses, for optimization.

For each prompt $\tau\in\mathcal{T}_t^{\mathcal{B}}$, the LLM generates $k$ independent responses $\bm{y}^\tau_t=\{y^\tau_{t,j}\}_{j=1}^k$, with each verified by a reward function $r(\tau,y)$ to yield the reward set $\bm{r}^\tau_t=\{r^\tau_{t,j}\}_{j=1}^k$.
Following common practice~\citep{guo2025deepseek}, we focus on binary correctness rewards, while our formulation does not rely on this restriction (Appendix~\ref{appsec:prime}).
We denote $\gamma^\tau_t=\mathbb{E}[r^\tau_t]$ as the latent success rate of prompt $\tau$ at step $t$, which acts as an intrinsic difficulty attribute of the prompt and represents the probability that the current model solves $\tau$.
The batch-level feedback at $t$-th step is $\mathcal{R}_t^{\mathcal{B}} = \{\bm r^{\tau_{t,i}}_t\}_{i=1}^{\mathcal{B}}$, corresponding to the generative process:
$$p(\mathcal{R}_t^{\mathcal{B}}\vert \mathcal{T}_t^{\mathcal{B}}, \bm\theta_{t})=\int\prod_{i=1}^{\mathcal{B}}p(\bm r_{t}^{\tau_{t,i}}\vert\gamma_t^{\tau_{t,i}})p(\gamma_t^{\tau_{t,i}}\vert \tau_{t,i}, \bm\theta_{t})d\gamma_t^{\tau_{t,i}}.$$
We define the observation history of prompt $\tau$ as
$H_t^\tau = \{\bm r^\tau_j \vert \tau\in\mathcal{T}_{j}^{\mathcal{B}}\ \text{and} \ j \le t\}$.
The full optimization history is defined as the collection of all prompt-specific history
$H_t =\bigcup_{\tau\in\mathcal{T}}H_t^\tau=\{\mathcal{T}_i^{\mathcal{B}}, \mathcal{R}_i^{\mathcal{B}}\}_{i=1}^t$.

For brevity, we use $\tau$ to denote both the natural language prompt and its corresponding embedding representation. 
The choice of encoding mechanism is orthogonal to our core studies, and we adopt the WordLlama toolkit implementation \citep{miller2024wordllama} to ensure efficient processing, with further discussion provided in Appendix~\ref{appsec:discusssemantic}.

\subsection{Reinforcement Learning with Verifiable Rewards}

RLVR aims to optimize the LLM policy $\pi_{\bm{\theta}}$ to
maximize the expected verifiable reward:
\begin{equation}
\max_{\bm{\theta}} \; 
\mathbb{E}_{\tau \sim \mathcal{T}, \; y \sim \pi_{\bm{\theta}}(\cdot \vert \tau)}
\big[ r(\tau, y) \big],
\end{equation}
where $\pi_{\bm{\theta}}(y \vert \tau)$ is the model's conditional distribution over responses for prompt $\tau$, and $r(\tau, y)$ is a verifiable reward evaluating the quality of response $y$.  

\textbf{Group Relative Policy Optimization (GRPO).} \citet{shao2024deepseekmath} introduces GRPO to stablize RL post-training via the group-normalized advantage while removing the value network used in PPO~\citep{schulman2017proximal}:
\begin{equation}
\small
\begin{aligned}
   &\mathcal{J}_{\text{GRPO}}(\bm\theta) = 
    \mathbb{E}_{\substack{\tau \sim \mathcal{T}_t^{\mathcal{B}}, \; \{y^\tau_i\}_{i=1}^k \sim \pi_{\bm\theta_{\text{old}}}(\cdot|\tau)}} 
    \left[ \frac{1}{k}\sum_{i=1}^k\frac{1}{|y^\tau_i|}\sum_{t=1}^{|y^\tau_i|}
    \left(
    \min\left(  \right. \right. \right. \\
    & \left. \left. \left. \rho_{i,t} \cdot \hat{A}_{i}, \;
    \text{clip}(\rho_{i,t}, 1 - \epsilon, 1 + \epsilon) \cdot \hat{A}_{i} \right) 
    - \beta\cdot\mathrm{KL}(\pi_{\bm \theta}||\pi_{\text{ref}})
    \right) \right]
\end{aligned}
\end{equation}
where $\rho_{i,t} = \frac{\pi_{\bm\theta}(y_{i,t} \vert \tau, y_{i,<t})}{\pi_{\bm\theta_{\text{old}}}(y_{i,t} \vert \tau, y_{i,<t})}$ is the importance ratio, and $\pi_{\text{ref}}$ is a fixed reference policy.
The KL divergence term penalizes deviation from $\pi_{\text{ref}}$, with $\beta$ controlling the regularization strength.  
The group-relative advantage for the $i$-th response is computed as
\vspace{-5pt}
\begin{equation}
\small
\hat{A}_i = \frac{r^\tau_i - \text{mean}(\{r^\tau_i\}_{i=1}^k)}
{\text{std}(\{r^\tau_i\}_{i=1}^k)}.
\end{equation}

\subsection{Online Prompt Selection for RLVR}

RLVR enhances LLM reasoning but comes with high computational costs, driving the need for prompt curation.
Previous studies show that prompts affect optimization unevenly. 
For algorithms like GRPO, rewards with zero variance can lead to vanishing policy gradients~\citep{yu2025dapo, zhang2025srpo}. 
In contrast, prompts of intermediate difficulty provide more informative training signals~\citep{chen2025self, bae2025online, zeng2025cures}.
To mitigate ineffective prompts and enhance training, online prompt selection methods~\citep{yu2025dapo, qu2025prompt, zhang2025speedrl, zheng2025greso} have been proposed to adaptively choose prompt batches during optimization.
The key to the prompt evaluation is to precisely estimate the success rate $\gamma^{\tau_t}_t$ under an arbitrary $\tau_t$ and $\bm\theta_t$ at the $t$-th iteration.

\textbf{Evaluation-based selection.}  
Methods such as Dynamic Sampling (DS)~\citep{yu2025dapo} and SPEED-RL~\citep{zhang2025speedrl} over-sample and evaluate a candidate set $\mathcal{T}_t^{\hat{\mathcal{B}}}$ of size $\hat{\mathcal{B}}$ with real model rollouts, and then select the batch:
\begin{equation}
\small
\mathcal{T}_t^{\mathcal{B}} = 
\Big\{ \tau \in \mathcal{T}_t^{\hat{\mathcal{B}}} \ \big| \ 
\text{std}(\{r^\tau_i\}_{i=1}^k) > 0 \Big\}.
\end{equation}
This approach ensures informative prompts but exhausts computations from large-scale exact prompt evaluations~\citep{zheng2025greso}.

\textbf{Prediction-based selection.}  
To circumvent extra prompt evaluation, several methods~\citep{zhang2025srpo, zheng2025greso, gao2025prompt} adopt the strategy of predicting prompt difficulty.  
This requires constructing a series of PPMs as $p(\gamma_t^\tau\vert\tau,\bm{\theta}_t)$ iteratively. 
Take MoPPS~\citep{qu2025prompt} as example, it employs a Beta posterior for each prompt and utilizes specific history data to estimate success rates.
Nevertheless, the PPM in MoPPS hardly keeps pace with the evolving model state $\bm{\theta}_t$ and treats prompts independently.
These raise the generalization issue due to the failure of sharing information across prompts and update delay degrades the success rate accuracy.

\section{Method}

This section presents \textsf{GPS} to construct a generalizable PPM, along with a principled acquisition criterion for efficient RL post-training.
Meanwhile, we show that test-time compute allocation can benefit from the online learned PPM.

\begin{algorithm}[t]
\caption{Generalizable Predictive Prompt Selection~(\textsf{GPS})}
\KwInput{
Prompt pool $\mathcal{T} = \{\tau_i\}_{i=1}^N$; 
Batch size $\mathcal{B}$; 
LLM $\pi_{\bm \theta_1}$ with parameters $\bm \theta_1$; 
Generalizable PPM with parameters $(\bm{\psi}_1, \bm{\eta}_1, \bm{\phi}_1)$; 
Diversity weight $\lambda$;
Training steps $T$.
}
\KwOutput{LLM $\pi_{\bm \theta}$}

Initialize history $H_0 \leftarrow \emptyset$ \;

\For{$t = 1$ \KwTo $T$}{
    \tcp{\textcolor{blue}{Prompt Difficulty Prediction}}
    \ForEach{$\hat{\tau} \in \mathcal{T}$}{
        Estimate $\hat{\gamma}_t^{\hat{\tau}}$ via Eq.~(\ref{eq:mc}) using $H_{t-1}$\;
    }

    \tcp{\textcolor{blue}{Prompt Batch Selection}}

    Initialize selected batch $\mathcal{T}_t^{\mathrm{partial}} \leftarrow \emptyset$ \;
    
    \While{$|\mathcal{T}_t^{\mathrm{partial}}| < \mathcal{B}$}{
        Select a prompt $\tau^\star$ via Eq.~(\ref{eq:greedy})\;
        
        $\mathcal{T}_t^{\mathrm{partial}} \leftarrow \mathcal{T}_t^{\mathrm{partial}} \cup \{\tau^\star\}$\;
    }
    $\mathcal{T}_t^{\mathcal{B}} \leftarrow \mathcal{T}_t^{\mathrm{partial}}$\;

    \ForEach{$\tau \in \mathcal{T}_t^{\mathcal{B}}$}{
        Generate responses $\bm{y}_t^\tau = \{y_{t,j}^\tau\}_{j=1}^k$ using $\pi_{\bm\theta_t}$\;
        
        Compute rewards $\bm{r}_t^\tau = \{r_{t,j}^\tau\}_{j=1}^k$ \;
    }

    \tcp{\textcolor{blue}{Policy Update in RLVR}}
    Update LLM $\bm\theta_t \to \bm\theta_{t+1}$ with $\{(\tau, \bm{y}_t^\tau, \bm{r}_t^\tau)\}_{\tau \in \mathcal{T}_t^{\mathcal{B}}}$\;

    \tcp{\textcolor{blue}{History and PPM Update}}
    $H_t \leftarrow H_{t-1} \cup \{(\tau, \bm{r}_t^\tau)\}_{\tau \in \mathcal{T}_t^{\mathcal{B}}}$\;

    Update $(\bm{\psi}_t, \bm{\eta}_t, \bm{\phi}_t)$ by maximizing ELBO in Eq.~(\ref{eq:elbo})\;
}
\label{algo}
\end{algorithm}

\subsection{The Curse of Prompt-Specific PPMs}

\begin{figure*}
    \centering
    \includegraphics[width=0.95\linewidth]{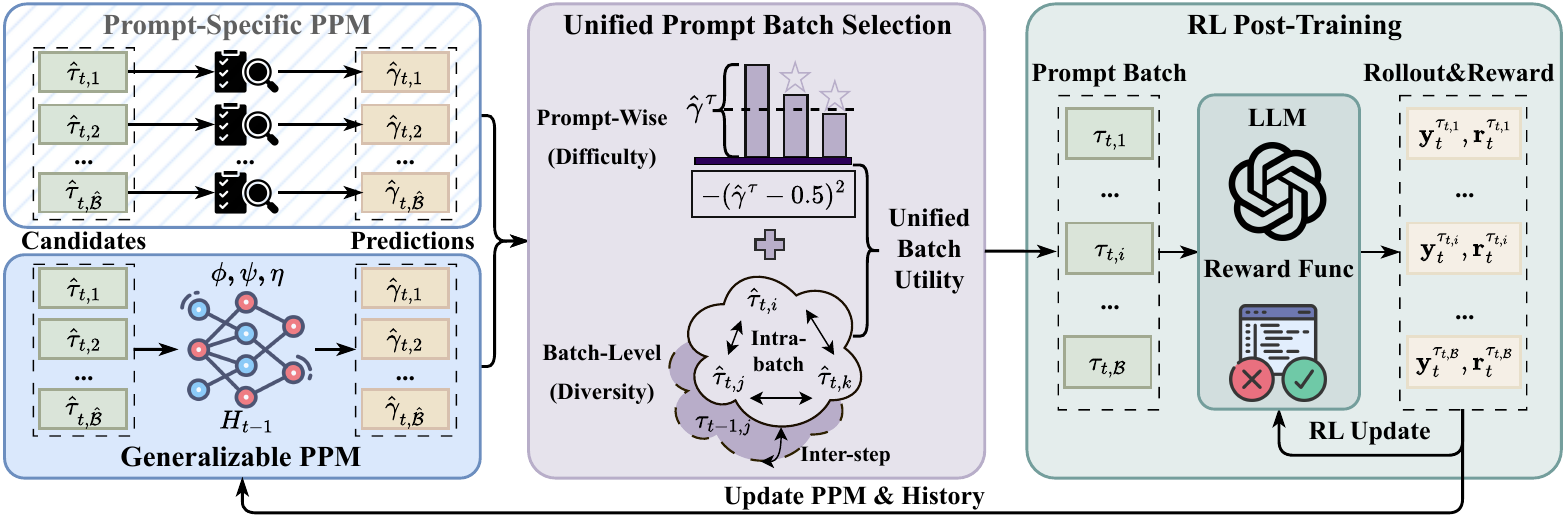}
    \caption{
    Framework overview.
    Unlike prompt-specific modeling like MoPPS \citep{qu2025prompt}, \textsf{GPS} proposes a generalizable prompt predictive model~(PPM)
    to estimate difficulty and track LLM evolution.
    The comprehensive prompt batch acquisition accounts for intermediate difficulty prioritization and batch-level diversity, improving RL post-training efficiency while mitigating PPM overfitting.
    }
    \label{fig:framework}
\end{figure*}

The family of prompt-specific PPMs \citep{zhang2025srpo, zheng2025greso, qu2025prompt} estimates $\gamma^{\tau}_t$ with prompt-specific histories, maintaining independent posteriors per prompt:

\begin{equation}
\small
    p(\{\gamma_t^\tau\}_{\tau \in \mathcal{T}} \vert H_{t-1})
    =
    \prod_{\tau \in \mathcal{T}}
    p(\gamma_t^\tau \vert \tau, H_{t-1}^\tau).
    \label{eq:prompt_independent}
\end{equation}
While this formulation is computationally efficient, it raises significant concerns when handling large-scale prompts and rapidly changing LLM dynamics:
\begin{itemize}
    \item \textbf{Cross-Prompt Generalization Bottleneck.}
    Independent modeling in Eq.~(\ref{eq:prompt_independent}) implicitly assumes that difficulty is non-transferable, precluding information sharing between related prompts. 
    This encounters a cold-start bottleneck. 
    Reliable estimates are only available for frequently sampled prompts. 
    In high-dimensional prompt spaces, independent estimators often collapse toward uninformative priors, failing to leverage the semantic structure of the prompt.
    \item \textbf{Accumulated Estimation Error from Delayed Reward Signals.}
    Standard approaches often assume local stationarity, extrapolating future performance from past difficulty in the absence of sufficient prompt-specific observations.
    However, because the underlying model evolves continuously during training, prompt difficulty is inherently non-stationary. 
    Without a mechanism to model these dynamics, predictors tend to suffer from lagged adaptation, failing to perceive performance shifts before they occur.
\end{itemize}
Moreover, such methods also fail to generalize to unseen prompts after training, as summarized in Table~\ref{tab:methodcompare}.

Note that the prompt difficulty depends jointly on the prompt $\tau$ itself and the evolving $\bm\theta_t$, while the accumulated history $H_{t-1}$ reflects the change of LLM dynamics.
Hence, distinguished from Eq.~(\ref{eq:prompt_independent}), this work turns to a more natural way of estimating the prompt difficulty, which uses the complete history $H_{t-1}$ in modeling, i.e.,
\begin{equation}
\small
    p(\gamma^\tau_t \vert \tau, H_{t-1}),\ \forall \tau\in\mathcal{T}.
\end{equation}
For an arbitrary prompt $\tau$, this approach can circumvent delayed reward signals and potentially improve predictive accuracy with the help of the optimization signals from other relevant prompts.

\subsection{Generalizable Prompt Predictive Model}
Building on the previous analysis, we investigate how to construct generalizable PPMs.
The basic idea is to leverage optimization history and compress difficulty-aware information into the latent variable shared across prompts.

\begin{table*}[t]
\setlength{\tabcolsep}{12pt}
    \centering
    \caption{Comparison of typical online prompt selection methods.}
    \vspace{-5pt}
    \resizebox{0.93\linewidth}{!}{
    \begin{tabular}{c|ccccccc}
    \toprule
    Method & \makecell{Prompt\\Evaluation} & \makecell{Difficulty \\ Modeling} & \makecell{Prompt Batch \\ Selection} & \makecell{Compute \\ Efficiency} & \makecell{Dataset \\ Scalability} & \makecell{Test-time\\Computation} \\
    \midrule
    DS & Exact & - & Threshold Filtering & \ding{55} & \ding{51} & \ding{55} \\
    MoPPS & Predictive & Independent & Max-Sum (Top-$\mathcal{B}$) & \ding{51} & \ding{55} & \ding{55} \\
    \midrule
    \textsf{GPS} (Ours) & Predictive & Global & Max-Sum Diversity & \ding{51}  & \ding{51} & \ding{51} \\
    \bottomrule
    \end{tabular}
    }
    \label{tab:methodcompare}
\end{table*}

\textbf{Generative Modeling of PPMs with Latent Variables.}
Here, we treat the prompt feedback and associated optimization signals at the $t$-th step as the conditional generative result from previous accumulated experience $H_{t-1}$.
Specifically, a global latent variable $\bm{z}_t$, referred to as the \emph{difficulty context}, is introduced to provide foresight of generalizable prompt difficulty information from $H_{t-1}$.
Then the marginal distribution of the whole optimization history can be factorized into:
\begin{equation}
\small
p(H_{0:T})
= p(H_{0})\int\prod_{t=1}^{T} p_{\bm\psi}(H_t \vert \bm z_t)p_{\bm\eta}(\bm z_t \vert H_{t-1})d\bm z_{1:T},
\end{equation}
where $p_{\bm\eta}(\bm z_t \vert H_{t-1})$ is a conditional prior, and $p_{\bm\psi}(H_t \vert \bm z_t)\stackrel{\Delta}{=}\prod_{i=1}^{\mathcal{B}} p_{\bm\psi}(\gamma_t^{\tau_{t,i}}\vert\tau_{t,i},\bm z_t)$ is a shared likelihood model for prompt difficulty estimation.
The function of global latent variables is akin to classical conditional generative modeling \citep{sohn2015learning,garnelo2018neural} to represent the context that allows information to transfer across prompts, while the evolving prior enables predictions to adapt to the non-stationary optimization trajectory.

\textbf{Variational Inference.}
To handle the intractability of the marginal likelihood, we introduce a variational posterior $q_{\bm\phi}(\bm z_t \vert H_t)$.
Applying Jensen’s inequality yields the evidence lower bound (ELBO):
\begin{equation}
\small
\begin{split}
&\ln p(H_t \vert H_{t-1})= \ln\mathbb{E}_{q_{\bm\phi}(\bm z_t \vert H_t)}\left[
\frac{
p_{\bm\eta}(\bm z_t \vert H_{t-1}) p_{\bm\psi}(H_t \vert \bm z_t)
}{
q_{\bm\phi}(\bm z_t \vert H_t)
}\right]
\\
&\ge\mathbb{E}_{q_{\bm\phi}(\bm z_t \vert H_t)}\big[ \ln p_{\bm\psi}(H_t \vert \bm z_t) \big]-D_{\mathrm{KL}}\left[ q_{\bm\phi}(\bm z_t \vert H_t) \| p_{\bm\eta}(\bm z_t \vert H_{t-1})\right]
\\
&=\mathcal{L}_{\mathrm{ELBO}}(\bm\psi, \bm\phi, \bm\eta)
\end{split}
\label{eq:elbo}
\end{equation}  

\vspace{-15pt}
Maximizing the ELBO jointly optimizes: (i) the decoder $p_{\bm{\psi}}$, which captures the shared mapping from context to difficulty; (ii) the encoder $q_{\bm{\phi}}$, which extracts current contextual information; and (iii) the history-dependent prior $p_{\bm{\eta}}$, which acts as a temporal summarizer of the optimization history.
Implementation details are deferred to Appendix~\ref{appsec:predictor_impl}.

\textbf{Predictive Distribution.}
The difficulty of a candidate prompt $\tau_t$ at step $t$
is quantified via the predictive distribution
\begin{equation}
\small
p(\gamma \vert \tau_t, H_{t-1})
= \int p_{\bm\psi}(\gamma\vert\tau_t, \bm z_t) p_{\bm\eta}(\bm z_t \vert H_{t-1}) d\bm z_t.
\end{equation}
In practice, we use Monte Carlo approximation:
\begin{equation}
\small
\hat{\gamma}_t^{\tau_{t}} \approx \frac{1}{M} \sum_{m=1}^M p_{\bm{\psi}}(\gamma\vert \tau_{t}, \bm{z}_t^{(m)}), \ \text{with} \ \bm{z}_t^{(m)} \sim p_{\bm{\eta}}(\cdot\vert H_{t-1}).
\label{eq:mc}
\end{equation}  
This mechanism enables the model to infer the difficulty of even unvisited prompts, while the time-varying prior $p_{\bm{\eta}}$ ensures the predictions remain synchronized with the evolving policy.

Further, we provide a theoretical justification for global difficulty modeling, suggesting that conditioning on full optimization history yields a strictly lower predictive mean squared error than independent estimators based solely on per-prompt observations.
A formal proof is provided in Appendix~\ref{appsec:proof}.

\begin{restatable}[Better Prediction with Shared History]{theorem}{shareprediction}\label{theorem_share}
Let $\hat{\gamma}^{\tau,\mathrm{ind}} := \mathbb{E}[\gamma_t^\tau \vert H_{t-1}^\tau]$ be the optimal predictor based only on prompt-specific history,
and $\hat{\gamma}^{\tau,\mathrm{shr}} := \mathbb{E}[\gamma_t^\tau \vert H_{t-1}]$ be the optimal predictor based on the full optimization history,
where $H_{t-1}^\tau \subset H_{t-1}$.
The prediction risk
$\mathcal{R}(\hat{\gamma}) := \mathbb{E}[(\hat{\gamma}-\gamma_t^\tau)^2]$
admits the orthogonal decomposition
\begin{equation}
\label{eq:mse_decomp}
\mathcal{R}(\hat{\gamma}^{\tau,\mathrm{shr}})=
\mathcal{R}(\hat{\gamma}^{\tau,\mathrm{ind}})-\mathcal{C}(\tau),
\end{equation}
where
$
\mathcal{C}(\tau)=\mathbb{E}\left[\big(\hat{\gamma}^{\tau,\mathrm{shr}}-\hat{\gamma}^{\tau,\mathrm{ind}}\big)^2\right]\ge 0
$ is the estimation gap.
Moreover, $\mathcal{C}(\tau) > 0$ if and only if $H_{t-1}$ provides non-redundant predictive information about $\gamma_t^\tau$ beyond $H_{t-1}^\tau$.
\end{restatable}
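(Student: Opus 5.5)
The argument is the standard Pythagorean decomposition for nested conditional expectations in $L^2$. Write $\mathcal{G}^{\mathrm{ind}}:=\sigma(H_{t-1}^\tau)$ and $\mathcal{G}^{\mathrm{shr}}:=\sigma(H_{t-1})$. Since $H_{t-1}^\tau\subset H_{t-1}$, the per-prompt history is a measurable function of the full history, so $\mathcal{G}^{\mathrm{ind}}\subseteq\mathcal{G}^{\mathrm{shr}}$. Throughout I assume $\gamma_t^\tau\in[0,1]$, which makes it square integrable, so both conditional expectations exist and are the $L^2$-optimal predictors in their respective $\sigma$-algebras.

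\textbf{Step 1 (decomposition).} Write the error of the independent predictor as a sum of two pieces:
\[
\hat{\gamma}^{\tau,\mathrm{ind}}-\gamma_t^\tau=(\hat{\gamma}^{\tau,\mathrm{ind}}-\hat{\gamma}^{\tau,\mathrm{shr}})+(\hat{\gamma}^{\tau,\mathrm{shr}}-\gamma_t^\tau).
\]
Squaring and taking expectations gives $\mathcal{R}(\hat{\gamma}^{\tau,\mathrm{ind}})=\mathcal{C}(\tau)+\mathcal{R}(\hat{\gamma}^{\tau,\mathrm{shr}})+2\,\mathbb{E}[(\hat{\gamma}^{\tau,\mathrm{ind}}-\hat{\gamma}^{\tau,\mathrm{shr}})(\hat{\gamma}^{\tau,\mathrm{shr}}-\gamma_t^\tau)]$.

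\textbf{Step 2 (the cross term vanishes).} The factor $D:=\hat{\gamma}^{\tau,\mathrm{ind}}-\hat{\gamma}^{\tau,\mathrm{shr}}$ is $\mathcal{G}^{\mathrm{shr}}$-measurable, because $\mathcal{G}^{\mathrm{ind}}\subseteq\mathcal{G}^{\mathrm{shr}}$. It is also bounded. Conditioning on $\mathcal{G}^{\mathrm{shr}}$ and pulling $D$ out of the inner expectation gives
\[
\mathbb{E}[D(\hat{\gamma}^{\tau,\mathrm{shr}}-\gamma_t^\tau)]=\mathbb{E}\big[D\,(\hat{\gamma}^{\tau,\mathrm{shr}}-\mathbb{E}[\gamma_t^\tau\mid\mathcal{G}^{\mathrm{shr}}])\big]=0.
\]
Rearranging Step 1 then yields Eq.~(\ref{eq:mse_decomp}). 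Nonnegativity of $\mathcal{C}(\tau)$ is immediate because it is the expectation of a square.

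\textbf{Step 3 (the iff).} This step needs the most care, because ``non-redundant predictive information'' has to be made precise. I will define it as $\mathbb{E}[\gamma_t^\tau\mid H_{t-1}]\neq\mathbb{E}[\gamma_t^\tau\mid H_{t-1}^\tau]$ on a set of positive probability. Equivalently, $\hat{\gamma}^{\tau,\mathrm{shr}}$ is not almost surely $\mathcal{G}^{\mathrm{ind}}$-measurable. With this definition, $\mathcal{C}(\tau)=0$ holds exactly when $\hat{\gamma}^{\tau,\mathrm{shr}}=\hat{\gamma}^{\tau,\mathrm{ind}}$ almost surely, so the claimed equivalence follows directly.

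To connect this with the intuition, I will add two remarks. First, by the tower property, the equality in the redundant case is the same as $\hat{\gamma}^{\tau,\mathrm{shr}}$ being $\mathcal{G}^{\mathrm{ind}}$-measurable. Second, conditional independence $\gamma_t^\tau\perp H_{t-1}\mid H_{t-1}^\tau$ is a sufficient condition for redundancy. It is not necessary, however, since redundancy only concerns the conditional mean. For that reason I will state the theorem's condition at the level of the conditional mean and not full conditional independence.
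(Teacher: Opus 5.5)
Your proposal is correct and follows essentially the same route as the paper. Both split the independent predictor's error into $(\gamma_t^\tau-\hat{\gamma}^{\tau,\mathrm{shr}})$ and $(\hat{\gamma}^{\tau,\mathrm{shr}}-\hat{\gamma}^{\tau,\mathrm{ind}})$, and both kill the cross term by conditioning on $H_{t-1}$. Your Step 3 simply makes precise the paper's informal ``non-redundant information'' criterion: you take it to mean $\hat{\gamma}^{\tau,\mathrm{shr}}\neq\hat{\gamma}^{\tau,\mathrm{ind}}$ with positive probability, which is exactly what the paper's Step 2 asserts.
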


\subsection{Difficulty-Diversity Unified Prompt Batch Selection}
Prior methods~\citep{chen2025self,qu2025prompt, gao2025prompt} select prompts based on independent prompt-wise scoring, without considering batch-level structure.
Instead, we formulate prompt selection as a batch-level optimization problem that explicitly balances prompt-wise difficulty with batch-level diversity, seeking informative and diverse prompt batches.

\textbf{Unified Batch Utility.}
Specifically, at training step $t$, our objective is to select a subset $\mathcal{T}_t^{\mathcal{B}} \subset \mathcal{T}$ that maximizes the following batch-level utility:
\begin{equation}
\small
\arg\max_{\mathcal{T}_t^{\mathcal B} \subset \mathcal{T}}U(\mathcal{T}_t^{\mathcal B}) = \underbrace{\sum_{\tau \in \mathcal{T}_t^{\mathcal B}} u(\hat{\gamma}_t^\tau)}_{\text{Difficulty Utility}} + \lambda \cdot \underbrace{D(\mathcal{T}_t^{\mathcal B} ; \mathcal{T}_{t-1}^{\mathcal B})}_{\text{History-Anchored Diversity}},
\label{eq:batch_utility_final}
\end{equation}
where $u(\hat{\gamma}_t^\tau)$ measures the informativeness of an individual prompt, $D(\cdot)$ promotes diversity at the batch level, and $\lambda > 0$ controls the trade-off.
Following common practice~\citep{ xu2025not, sun2025improving}, we prioritize prompts with intermediate difficulty and adopt
$u(\hat{\gamma}_t^\tau)=-(\hat{\gamma}_t^\tau-0.5)^2$.
This specific choice is highly effective for binary-reward settings, as targeting a $0.5$ success rate theoretically maximizes reward variance for stronger learning signals while naturally inducing an easy-to-hard curriculum. 
Importantly, \textsf{GPS} is flexible and can seamlessly accommodate alternative criteria; see Appendix~\ref{appsec:prime} for an example.

The diversity term encourages the batch to span a broad and non-repetitive region of the prompt space:
\begin{equation}
\small
D(\mathcal{T}_t^{\mathcal B} ; \mathcal{T}_{t-1}^{\mathcal B}) = \underbrace{\sum_{\tau_i, \tau_j \in \mathcal{T}_t^{\mathcal{B}}} \text{dist}(\tau_i, \tau_j)}_{\text{Intra-batch Dispersion}} + \underbrace{\sum_{\tau \in \mathcal{T}_t^{\mathcal{B}}} \sum_{\tau' \in \mathcal{T}_{t-1}^{\mathcal{B}}} \text{dist}(\tau, \tau^{\prime})}_{\text{Inter-step Exploration}},
\label{eq:diversity}
\end{equation}
where $\text{dist}(\cdot, \cdot)$ is a distance measure.
This provides dual benefits: it 
(i) improves LLM training by reducing redundant signals~\citep{jin2025revisiting, qu2025fast} and 
(ii) mitigates overfitting of the PPM by encouraging exposure to a broader region of the prompt space, as discussed in Appendix~\ref{appsec:diversediscuss}.

\textbf{Greedy Search Prompt Batch.}
Maximizing $U(\mathcal{T}_t^{\mathcal B})$ is a max-sum diversification problem, which is NP-hard~\citep{borodin2017max}.
We adopt a standard greedy approximation~\citep{borodin2017max, wang2023max}, which iteratively selects the prompt with the largest marginal gain 
\begin{equation}
\small
\tau^\star = \arg\max_{\tau \in \mathcal{T} \setminus \mathcal{T}_t^{\text{partial}}} 
\big[ U(\mathcal{T}_t^{\text{partial}} \cup \{\tau\}) - U(\mathcal{T}_t^{\text{partial}}) \big],
\label{eq:greedy}
\end{equation}
where $\mathcal{T}_t^{\text{partial}}$ denotes the batch constructed so far.
This greedy heuristic effectively balances selection quality with computational efficiency, enabling scalable prompt selection for large-scale LLM training.

\begin{figure*}[ht]
    \centering
    \includegraphics[width=\linewidth]{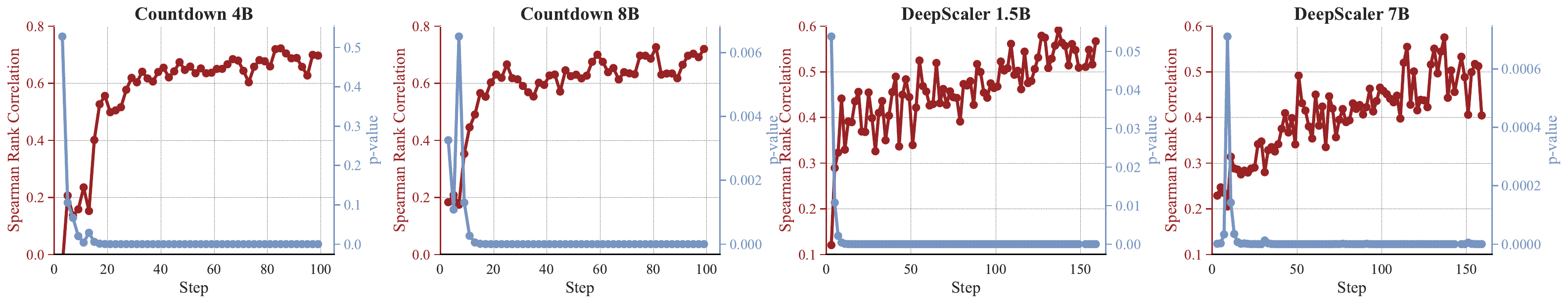}
    \vspace{-15pt}
    \caption{
Spearman’s rank correlation and $p$-value during training between predicted prompt difficulty and empirical success rate.
}
    \label{fig:corr_p}
\end{figure*}

\subsection{Extension to Test-Time Computation Allocation}
\label{sec:passk_ttc}

Recent work shows that uniformly allocating test-time computation leads to inefficient budget usage~\citep{damani2024learning, wang2025optpo}.
Considering \textit{Best-of-$N$} sampling~\citep{cobbe2021training, lightman2023let}, the marginal gain of one extra sample to prompt $\tau$ is
\begin{equation}
\small
\Delta_k(\tau)
= \mathrm{pass@(k+1)}(\tau) - \mathrm{pass@k}(\tau)
= (1 - \gamma^\tau)^{k} \gamma^\tau ,
\end{equation}
which motivates difficulty-aware allocation.
Fortunately, unlike prompt-specific modeling, generalizable PPM captures a global and transferable notion of prompt difficulty, making it feasible to generalize across unseen prompts for guiding test-time computation allocation.

Specifically, following the allocation strategy of \citep{damani2024learning}, we partition test prompts into three bins based on predicted difficulty and allocate lower budgets to easy or likely-unsolvable prompts, while assigning more computation to promising prompts that are challenging yet solvable  (details in Appendix~\ref{appsec:ttc}).
This reuse does not introduce additional training and can offer improvements over uniform allocation under a fixed budget, naturally complementing the training-time framework.

\section{Experiments}

\subsection{Experimental Setup}

We evaluate \textsf{GPS} on representative reasoning tasks: \textbf{mathematical} and \textbf{logical reasoning}, using large-scale benchmarks.
To assess the generality, we experiment with a \textbf{diverse set of LLM backbones} spanning different model sizes, including both base models and distilled models.
For RLVR algorithm, we primarily adopt the GRPO algorithm within the verl framework~\citep{sheng2024hybridflow}, though \textsf{GPS} is compatible with other algorithms as shown in Sec.~\ref{exp:algo}.
Model performance is measured by test accuracy, reported as the average $\mathrm{pass@1}$ over multiple rollouts per problem, where the number of generations varies across benchmarks.

\textbf{Mathematical Reasoning.}
We train on the DeepScaler dataset~\citep{luo2025deepscaler}, which contains $40.3\mathrm{k}$ competition-level math problems.
Following prior work~\citep{luo2025deepscaler,qu2025prompt}, we use DeepSeek-R1 distilled models: R1-Distill-1.5B (DSR-1.5B) and R1-Distill-7B (DSR-7B).
Evaluation is conducted on a suite of standard mathematical benchmarks, namely AIME24, AMC23, MATH500~\citep{lightman2023let}, Minerva Math (Minerva.)~\citep{lewkowycz2022solving}, and OlympiadBench (Olymp.)~\citep{he2024olympiadbench}.
To assess out-of-distribution generalization, we evaluate on general reasoning benchmarks: MMLU-Pro~\cite{wang2024mmlu}, ARC-c~\cite{clark2018think}, and GPQA-diamond (GPQA)~\cite{rein2024gpqa}.

\textbf{Logical Reasoning.}
We adopt the Countdown Number Game, a combinatorial arithmetic reasoning task that requires composing given numbers with basic operations to match a target value.
Training is performed on a $20\mathrm{k}$-instance subset of the Countdown-34 (CD34) dataset~\citep{tinyzero}.
Models are evaluated on both CD34 and a more challenging variant, Countdown-4 (CD4).
We use two base LLMs, Qwen3-4B-Base and Qwen3-8B-Base~\citep{yang2025qwen3}, and one instruct LLM Llama-3.2-3B-Instruct~\cite{grattafiori2024llama3} in Appendix~\ref{sec:llama}.

\begin{figure*}[ht]
    \centering
    \includegraphics[width=\linewidth]{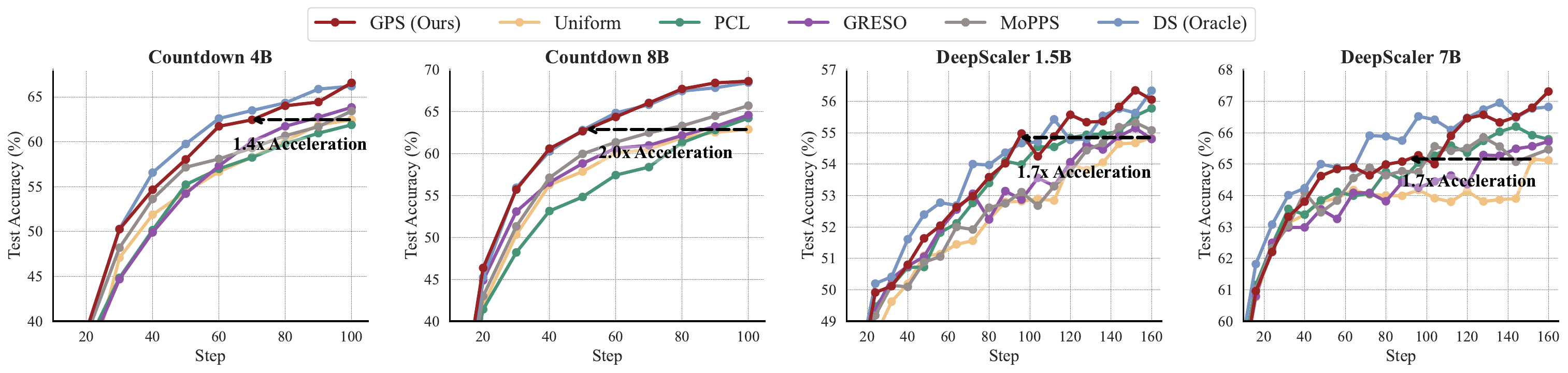}
    \vspace{-15pt}
    \caption{Training curves of \textsf{GPS} and baselines across different scenarios and backbone models versus training steps. DS serves an oracle baseline with respect to training steps, but incurs substantially higher rollout costs. Training curves plotted against the number of rollouts are provided in Fig.~\ref{fig:performance_rollout}.
    }
    \label{fig:performance}
\end{figure*}

\begin{table*}[t]
  \centering
  \caption{
Evaluation on mathematics benchmarks.
‘+’ denotes finetuning with the corresponding method.
‘Avg.’ reports the average accuracy, and ‘Runtime’ indicates total training time.
\textbf{Bold} and \underline{underlined} values indicate the best and second-best results, respectively.
}
  \resizebox{\linewidth}{!}{
    \begin{tabular}{rcccccc|cccc|c}
    \toprule
    \multicolumn{1}{c}{\multirow{3}{*}{\textbf{Method}}} & \multicolumn{6}{c|}{\textbf{In-Distribution}}          & \multicolumn{4}{c|}{\textbf{Out-of-Distribution}} &\\
\cmidrule{2-11}          & \textbf{MATH500} & \textbf{Olympiad.} & \textbf{Minerva.} & \textbf{AMC23} & \textbf{AIME24} & \multirow{2}[1]{*}{\textbf{Avg.}$\uparrow$} & \textbf{MMLU-Pro} & \textbf{ARC-c} & \textbf{GPQA}  & \multirow{2}[1]{*}{\textbf{Avg.}$\uparrow$} & \multirow{2}[1]{*}{\textbf{Runtime}$\downarrow$}
\\
          & \textbf{Avg@1} & \textbf{Avg@1} & \textbf{Avg@4} & \textbf{Avg@32} & \textbf{Avg@32} &       & \textbf{Avg@8} & \textbf{Avg@8} & \textbf{Avg@8} & & \\
          \midrule
    \textbf{DSR-1.5B} & 77.8  & 36.6  & 25.6  & 57.7  & 20.8  & 43.7  & 21.2  & 43.1  & 22.8  & 29.0 & - \\
    \textbf{+Uniform} & 86.2  & 49.0  & 30.8  & 76.1  & 32.4  & 54.9  & 22.3  & 45.4  & 27.5  & 31.7 & 16h \\
    \textbf{+PCL}  & 84.8  & 49.6  & 33.0  & 77.7  & 34.2  & \underline{55.8}  & 24.0  & 46.3  & 28.5  & \underline{33.0} & 17h \\
    \textbf{+GRESO}  &  86.2 &50.6  &30.7   &77.1  &31.5  &55.2 & 24.7 &46.7 &26.4  & 32.6 & 27h \\
    \textbf{+MoPPS} & 86.2  & 49.0  & 29.9  & 77.8  & 34.0  & 55.4  & 21.4  & 44.6  & 27.5  & 31.2 & 17h \\
    \textbf{+DS (Oracle)} & 87.2  & 51.0  & 30.4  & 79.2  & 34.7  & \textbf{56.5} & 24.5  & 46.7  & 26.8  & 32.7 & 30h \\
    \rowcolor{myhighlight}
    \textbf{+\textsf{GPS} (Ours)} & 88.0  & 51.3  & 31.3  & 78.1  & 33.8  & \textbf{56.5}  & 23.3  & 46.9  & 30.4  & \textbf{33.5} &16h \\
    \midrule
    \textbf{DSR-7B} & 87.2  & 47.5  & 35.8  & 76.5  & 38.8  & 57.1  & 50.5  & 76.1  & 19.2  & 48.6 & - \\
    \textbf{+Uniform} & 91.6  & 57.9  & 37.5  & 89.5  & 50.9  & 65.5  & 49.8  & 74.1  & 17.3  & 47.1  & 40h\\
    \textbf{+PCL} & 93.2  & 58.0  & 38.4  & 90.1  & 52.8  & 66.5  & 50.6  & 74.4  & 22.4  & 49.1 & 50h \\
    \textbf{+GRESO} & 91.8  & 58.2  & 39.3  & 89.7  & 49.8  & 65.8  & 52.2  & 77.1  & 25.0  &  \underline{51.4}& 53h \\
    \textbf{+MoPPS} & 92.0  & 58.3  & 39.1  & 89.5  & 50.8  & 65.9  & 52.4  & 76.7  & 23.4  & 50.9 &  42h\\
    \textbf{+DS (Oracle)} & 93.2  & 60.0  & 40.4  & 90.5  & 51.0  & \underline{67.0}  & 52.0  & 78.2  & 19.6  & 49.9 & 77h \\
    \rowcolor{myhighlight}
    \textbf{+\textsf{GPS} (Ours)} & 93.2  & 62.1  & 39.7  & 90.5  & 51.8  & \textbf{67.4} & 52.8  & 79.7  & 22.2  & \textbf{51.5} &49h \\
    \bottomrule
    \end{tabular}%
    }
  \label{tab:matheval}%
\end{table*}%

\textbf{Baselines.}
We compare \textsf{GPS} with several representative sampling strategies:
(1) \textbf{Uniform} samples prompts uniformly; 
(2) \textbf{MoPPS}~\citep{qu2025prompt} maintains prompt-specific Beta posteriors to estimate difficulty from historical observations;
(3) \textbf{PCL}~\citep{gao2025prompt} estimates prompt difficulty using a LLM;
(4) \textbf{GRESO}~\citep{zheng2025greso} maintains per-prompt historical reward statistics to guide probabilistic filtering;
and (5) \textbf{Dynamic Sampling (DS)}~\citep{yu2025dapo}, which oversamples prompts and filters out uninformative ones using exact evaluation.
We consider \textbf{DS as an Oracle} baseline since it relies on real evaluation.
Our focus is on reducing computational overhead relative to DS rather than outperforming it in accuracy.

Additional implementation details are provided in Appendix~\ref{appsec:implementation}, together with additional experimental settings and extended evaluations in Appendix~\ref{appsec:results} and prompt examples in Appendix~\ref{appsec:dataexample}.

\subsection{Main Results}
\subsubsection{Reliable Difficulty Prediction}
To evaluate the quality of difficulty prediction, we adopt \emph{Spearman rank correlation coefficient}~\citep{sedgwick2014spearman} $\rho$, which measures the strength of a monotonic relationship between two sequences and is invariant to monotonic transformations, making it suitable for assessing relative difficulty ordering.
Formally, it is defined as:

\begin{equation}
\small
\rho = 
\frac{
\mathrm{cov}\!\left(
\mathrm{rank}(\hat{\Gamma}^{\mathcal{B}}),
\mathrm{rank}(\widetilde{\Gamma}^{\mathcal{B}})
\right)
}{
\sigma_{\mathrm{rank}(\hat{\Gamma}^{\mathcal{B}})}
\cdot
\sigma_{\mathrm{rank}(\widetilde{\Gamma}^{\mathcal{B}})}
},
\end{equation}
where $\hat{\Gamma}^{\mathcal{B}} = \{\hat{\gamma}^\tau\}_{\tau \in \mathcal{T}^\mathcal{B}}$ and $\widetilde{\Gamma}^{\mathcal{B}} = \{\widetilde{\gamma}^\tau\}_{\tau \in \mathcal{T}^\mathcal{B}}$ denote the predicted and empirical success rates for the batch $\mathcal{T}^\mathcal{B}$, and $\mathrm{rank}(\cdot)$ returns the rank ordering.
Furthermore, we report the $p$-value under the null hypothesis testing that $\hat\gamma^\tau$ and $\widetilde\gamma^\tau$ are independent to assess statistical significance.

As shown in Fig.~\ref{fig:corr_p}, \textsf{GPS} rapidly learns to predict prompt difficulty within only a few optimization steps, well before completing a full epoch, achieving extremely low $p$-values.
The correlation steadily improves as histories accumulate.
This validates that the proposed generalizable PPM enables reliable difficulty prediction with cross-prompt generalization.
Furthermore, Fig.~\ref{fig:corr} shows that \textsf{GPS} attains higher prediction quality than MoPPS
and consistently higher effective sample ratio than Uniform,
which can be attributed to our proposed comprehensive batch acquisition strategy.

\begin{figure*}
    \centering
    \includegraphics[width=\linewidth]{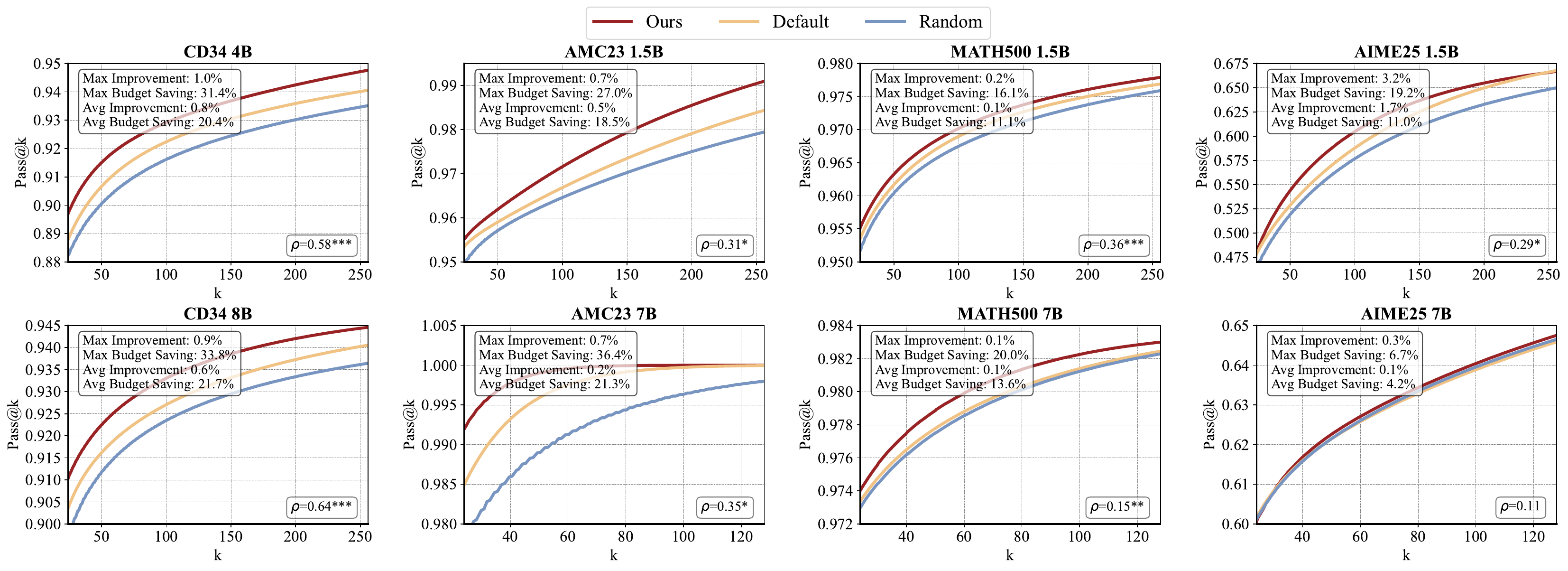}
    \vspace{-15pt}
\caption{
Test-time computation allocation results across benchmarks.
$\mathrm{pass@k}$ versus the number of generated samples $k$ is shown, with insets reporting Spearman’s rank correlation $\rho$ (* $p<0.05$, ** $p<0.01$, *** $p<0.001$) and other statistical metrics.
}
    \label{fig:ttc}
\end{figure*}

\begin{figure}[t]
    \centering
    \includegraphics[width=\linewidth]{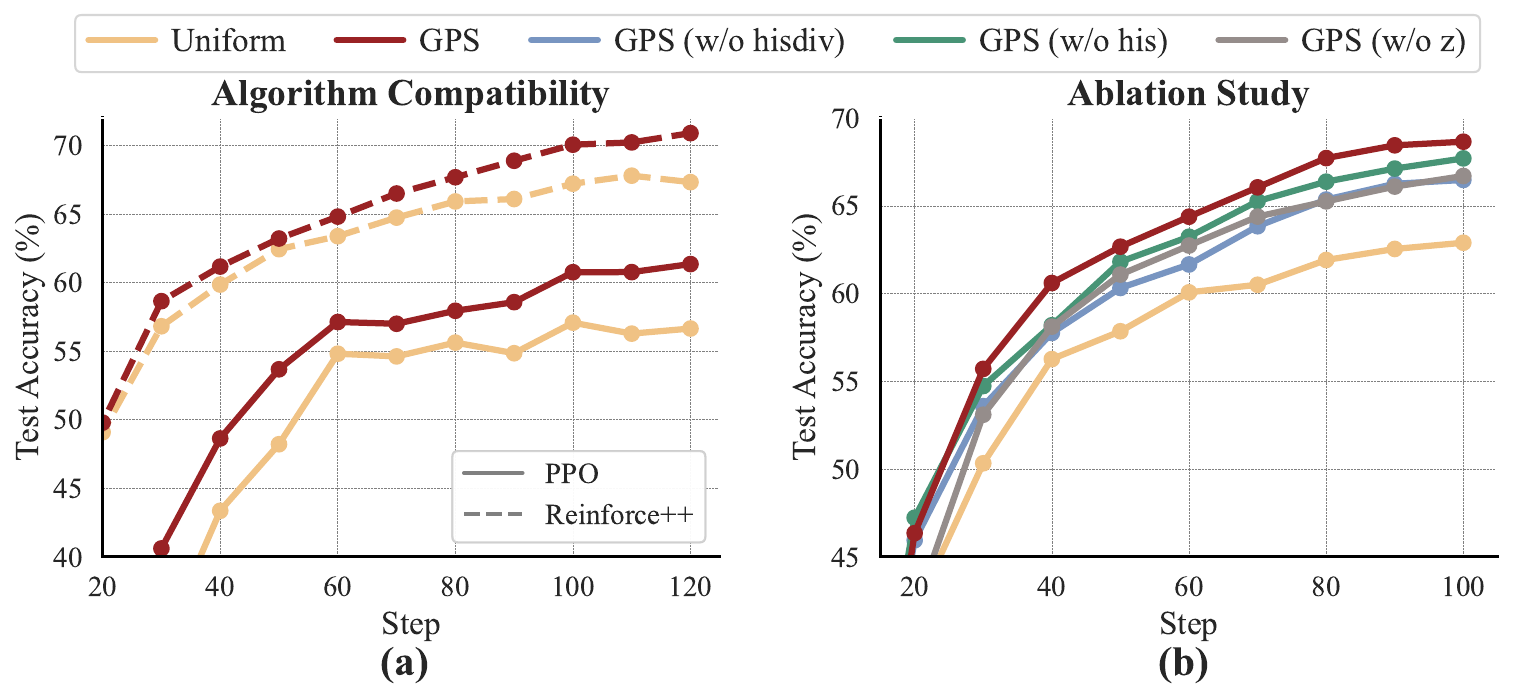}
    \vspace{-15pt}
    \caption{(a) Training curves on Countdown with PPO and Reinforce++. \textsf{GPS} is compatible with both algorithms and consistently outperforms Uniform. (b) Ablation study on Countdown, including removing history-anchored diversity (w/o hisdiv), removing inter-step exploration only (w/o his), and replacing the PPM with a deterministic PPM without latent variables (w/o $z$).
    }
    \vspace{-10pt}
    \label{fig:algo-ablate}
\end{figure}

\subsubsection{Efficient and Improved RLVR}
To examine whether improved prompt selection translates into efficient RLVR training, we compare \textsf{GPS} with baselines across diverse scenarios and backbone models.
Fig.~\ref{fig:performance} shows training curves with respect to training steps, while Tables~\ref{tab:matheval} and~\ref{tab:cdeval} summarize the final performance.
Overall, \textsf{GPS} enables both more efficient and effective RLVR training.
Compared to Uniform, \textsf{GPS} consistently accelerates training, yielding a \textbf{1.4$\times$–2.0$\times$} speedup in terms of training steps.
It also delivers average gains of 1.6–1.9 points on mathematical tasks and 4.1–5.7 points on logical tasks.
These improvements come at minimal additional cost:
the lightweight PPM introduces negligible overhead (Appendix~\ref{appsec:complexity}), and the modest increase in runtime is due to longer response generation (Fig.~\ref{fig:response}).
Compared to DS, \textsf{GPS} achieves comparable performance while substantially reduced cost, requiring up to \textbf{69\%} fewer rollouts (Fig.~\ref{fig:performance_rollout}) and reducing training time by \textbf{28\%–47\%}.
Benefiting from the design of generative PPM and unified prompt batch selection, \textsf{GPS} outperforms prediction-based baselines.
Moreover, its strong performance on out-of-distribution benchmarks indicates improved general reasoning ability.

\subsubsection{Test-Time Generalization and Efficiency}
To investigate whether the learned generalizable PPM generalizes to test-time prompts, we conduct offline batch evaluations on logical and mathematical benchmarks.
Fig.~\ref{fig:ttc} shows that the PPM retains statistically significant correlation $\rho$ with empirical success rates on most unseen test benchmarks.
Building on these predictions, we evaluate difficulty-guided computation allocation and compare against  \textbf{Default} (fixed samples per prompt), and \textbf{Random} (the same allocation configurations with random difficulty predictions).
As shown in Fig.~\ref{fig:ttc}, the proposed method yields up to a $\bm{3.2\%}$ relative improvement over Default at a fixed budget, or reduces computation by up to $\bm{36.4\%}$ with no loss in performance.
Notably, this introduces no additional computational overhead.
The results suggest that training-time difficulty prediction can be reused for test-time, enabling a coherent training–test pipeline.
However, we observe that the predictive correlation naturally diminishes for semantically distant out-of-domain tasks, indicating that extending to broader domains may necessitate multi-domain co-training.

\subsection{Additional Analysis}

\subsubsection{Algorithm Compatibility}
\label{exp:algo}
Although primarily evaluated with GRPO, \textsf{GPS} is compatible with other RLVR algorithms.
We integrate it with two alternative algorithms, \textbf{PPO}~\citep{schulman2017proximal} and \textbf{Reinforce++}~\citep{hu2025reinforce++} on Countdown.
As shown in Fig.~\ref{fig:algo-ablate}(a), \textsf{GPS} consistently improves training efficiency and final performance over Uniform under both algorithms.
Notably, \textsf{GPS} is compatible with PPO despite its single-response generation per prompt.
This is enabled by the generalizable predictive model, in contrast to prior evaluation-based methods such as DS, which rely on multiple responses and are therefore inapplicable in this regime.
These results position \textsf{GPS} as a broadly applicable solution for enhancing sample efficiency across diverse RLVR pipelines.
\textsf{GPS} is also applicable to continuous-reward settings~(see Appendix~\ref{appsec:prime}).

\subsubsection{Ablation Studies}
We conduct ablation studies to evaluate the contributions of key components in \textsf{GPS}, including history-anchored diversity and the latent difficulty context in the generative PPM.  
As shown in Fig.~\ref{fig:algo-ablate}(b), removing history-anchored diversity (Ours w/o hisdiv) causes a substantial performance drop, while ablating only the inter-step exploration term (Ours w/o his) leads to a smaller decline.  
This indicates that both intra-batch dispersion and inter-step exploration play important roles in effective prompt batch selection.  
Replacing the generative PPM with a deterministic PPM that directly maps prompts to success rates, i.e., removing the latent difficulty context $z$ (Ours w/o z), also degrades performance, highlighting the benefit of modeling a shared latent difficulty context for cross-prompt generalization and policy evolution adaptation.  
Overall, these results confirm that each design component contributes meaningfully.
Additional results are provided in Appendix~\ref{appsec:fullablation}.

\subsubsection{Hyperparameters}
The effects of the diversity weight $\lambda$ and the candidate batch size are evaluated in Appendix~\ref{appsec:lambda} and~\ref{appsec:candidate}.

\section{Conclusion}

This work presents Generalizable Predictive Prompt Selection to accelerate RL post-training of large reasoning models. 
The small generative PPM is effective enough in prompt difficulty estimate, allowing for cross-prompt generalization and adaptation to policy evolution. 
The acquisition of difficulty and batch-level diversity integration helps improve LLM and PPM training.
Additionally, the learned PPM aids in test-time computation allocation.

\textbf{Discussion.}
This work first verifies the feasibility of efficiently steering LLMs' RL post-training with small generative PPMs.
Future work can be exploring more refined generative modeling for online prompt selection.
In addition, this work has preliminarily revealed the opportunity of PPM-guided test-time computation allocation; more advanced designs may further enhance its effectiveness.

\section*{Impact Statement}
This work focuses on improving the computational efficiency of RL post-training for LLMs via online predictive prompt selection.
By avoiding redundant rollouts, the proposed method helps lower training cost and resource consumption.
The approach operates solely at the level of data selection and optimization strategy, without introducing new model capabilities or directly altering model outputs.
Accordingly, it does not pose new societal or ethical risks beyond those inherent to LLMs, and any downstream impact depends on specific application contexts.

\bibliographystyle{icml2026}
\bibliography{example_paper}

\newpage
\appendix
\onecolumn
\section*{Appendix Overview}

This appendix provides supplementary discussions, theoretical analyses, and experimental details that support the main results. 
The appendix is organized as follows:

\begin{itemize}[leftmargin=10pt]
  \item \textbf{Appendix~\ref{appsec:related} (Related Works):}  
  reviews related works on RL post-training of LLMs and online prompt selection for RLVR.

  \item \textbf{Appendix~\ref{appsec:discuss} (Additional Discussions):}  
  offers clarifications on the contributions of the proposed method and further discusses the importance of history-anchored diversity, as well as the relationship between semantic representations and prompt difficulty.

  \item \textbf{Appendix~\ref{appsec:proof} (Theoretical Proof):}  
  presents detailed theoretical proof.

  \item \textbf{Appendix~\ref{appsec:implementation} (Implementation Details):}  
  provides comprehensive implementation details, including tasks, models, training details, and the realization of the generalizable prompt predictive model, as well as its extension to test-time computation allocation.

  \item \textbf{Appendix~\ref{appsec:results} (Extended Experimental Results):}  
  reports additional experimental results and analyses, including computational complexity analysis, extended evaluations, ablation studies, hyperparameter sensitivity analyses, training dynamics, and evaluations across different LLM families.

  \item \textbf{Appendix~\ref{appsec:dataexample} (Data Examples):}  
  presents representative data examples used in the experiments.
\end{itemize}

\section{Related Works}
\label{appsec:related}
\paragraph{RL Post-Training of LLMs.}
Reinforcement learning has become a central technique for aligning pretrained LLMs with desired behaviors and task objectives.
Early successes are largely driven by Reinforcement Learning with Human Feedback (RLHF),
which improves instruction following, safety, and alignment through preference-based optimization~\citep{dong2024rlhf, dai2023saferlhf, sun2023aligningrlhf, zheng2023secretsrlhf}.
More recent progress highlights Reinforcement Learning with Verifiable Rewards (RLVR), where reward signals are automatically verified, leading to substantial gains in structured reasoning domains such as mathematics and programming~\citep{jaech2024openai, guo2025deepseek, team2025kimi, chu2025sftrl, tinyzero, luo2025deepcoder, luo2025deepscaler}.
From an algorithmic perspective, Proximal Policy Optimization (PPO)~\citep{schulman2017proximal} remains a standard choice,
while Group Relative Policy Optimization (GRPO)~\citep{shao2024deepseekmath} reduces computational overhead by eliminating the value network and estimating advantages through group-normalized rewards.
Building on these foundations, a growing body of work focuses on stabilizing training, reducing bias, lowering variance, and improving sample efficiency~\citep{yuan2025vcppo, yue2025vapo, liu2025understanding, yu2025dapo, kazemnejad2024vineppo, hu2025reinforce++,zheng2025gspo,qu2026listwise}.
In parallel, large-scale empirical studies demonstrate the effectiveness of RL post-training across model sizes and application domains~\citep{luo2025deepscaler, dang2025reinforcement, luo2025deepcoder, zeng2025simplerl, meng2025mm, xu2024llava},
supported by increasingly mature systems for scalable RL training~\citep{sheng2024hybridflow, slime_github, wang2025roll}.

\paragraph{Online Prompt Selection for RLVR}
Beyond algorithmic improvements, data curation have emerged as critical factors in determining the efficiency of RL post-training.
A line of offline approaches pre-select training prompts based on heuristics such as difficulty, diversity, or solution length~\citep{ye2025limo, li2025limr, wen2025light, hu2025open, yang2024qwen2math, fatemi2025concise, wang2025oneexample}.
While effective in reducing training cost, these methods typically require additional preprocessing and remain static throughout training, making them fail to adapt to evolving LLM policy~\citep{qu2025prompt, gao2025prompt}.
To address this limitation, recent work explores online prompt selection that adapt to the current policy.
Some methods perform evaluation-based filtering by discarding uninformative prompts or prioritizing moderate difficulty~\citep{yu2025dapo, liu2025prorl, cui2025process, meng2025mm, bae2025online}, often at the expense of additional LLM evaluations.
An alternative direction is prediction-based prompt selection, which seeks to estimate prompt difficulty without explicit rollout-based evaluation~\citep{zhang2025srpo, zheng2025greso, gao2025prompt, qu2025prompt, zeng2025cures, chen2025self,mao2026dynamics,zou2025utility, wang2025model, qu2025fast}.
A representative method is MoPPS~\citep{qu2025prompt}, which models each prompt independently using a prompt-specific Beta posterior to estimate its success rate based on historical observations.
While this design avoids additional inference cost, it does not explicitly account for the continually evolving model state during online optimization,
nor does it allow information sharing across prompts.
As a result, prompt-specific modeling may suffer from lagged adaptation and limited generalization, particularly in large prompt pools where many prompts are sparsely observed.

\section{Additional Discussions}
\label{appsec:discuss}
\subsection{Contribution Clarification}

Efficiently guiding data selection for LLM optimization is a fundamentally challenging problem with substantial practical significance.
In RL post-training, evaluating candidate prompts using the target model itself is often prohibitively expensive, which makes evaluation-based strategies difficult to scale.
As a result, designing mechanisms that can steer LLM optimization using limited auxiliary computation remains an open and practically important challenge.

Prior work has predominantly approached this problem through lightweight prompt-specific modeling or by leveraging large models to assess prompt difficulty.
In contrast, this work represents an early attempt to use a shared and lightweight prompt predictive model to guide the optimization of a much larger LLM.
This \emph{small-steer-large} paradigm, in which a small predictive model captures transferable difficulty signals to drive LLM training, constitutes a key conceptual contribution and opens a promising design space for scalable optimization.

We emphasize that the specific choice of a conditional generative model for the prompt predictive model is not intended as a primary technical contribution.
Rather, it serves as a concrete instantiation of the broader idea of shared, history-conditioned modeling.
Our framework does not rely on the optimality of a particular generative architecture, and alternative formulations could be explored within the same paradigm.
Empirically, we find that even a relatively simple generative predictive model is sufficient to extract useful signals for prompt selection, highlighting the potential of this direction rather than claiming architectural optimality.

\subsection{Discussion on the Importance of History-Anchored Diversity}
\label{appsec:diversediscuss}

The motivation for history-anchored diversity arises from the coupled nature of prompt selection, PPM learning, and LLM optimization.
PPM predictions parameterize the acquisition criterion, which shapes the prompt distribution used for LLM optimization, while the optimization process in turn generates new feedback that is used to update the PPM.
This creates an intrinsic feedback loop between sampling, prediction, and policy learning, analogous to a \emph{chicken-and-egg} problem.

Without explicit diversity regularization, the acquisition criteria tends to over-exploit a narrow subset of prompts that are estimated to be most informative.
While this can yield short-term gains, it may result in a progressively concentrated training distribution for the PPM, increasing the risk of overfitting and weakening its ability to generalize difficulty estimates to underexplored regions of the prompt space.
As the predictor becomes increasingly biased toward frequently sampled prompts, such miscalibration can in turn reduce the diversity of training signals received by the LLM, potentially reinforcing sampling collapse and adversely affecting the performance of joint optimization.

History-anchored diversity explicitly counteracts this effect by encouraging coverage across both the current batch and historical selections.
By maintaining a more balanced and diverse prompt distribution, it improves the effectiveness and robustness of the PPM.
This, in turn, supports more effective LLM optimization through sustained exposure to diverse and appropriately challenging prompts.
In Appendix~\ref{appsec:fullablation}, the ablation results and the observed increase in the effective sample ratio provide empirical evidence for this role.

We note that history-anchored diversity is introduced as a general regularization principle rather than a specific instantiation, and alternative diversity measures or regularizers could be incorporated within the same framework.

\subsection{Discussion on Semantic Representations and Prompt Difficulty}
\label{appsec:discusssemantic}

Prompt difficulty is naturally determined jointly by the prompt itself and the evolving policy model.
In our implementation, these two factors are explicitly considered:
the former is represented by a fixed semantic embedding $\tau$, while the latter is captured by a time-evolving latent context $\bm{z}_t$ that summarizes the optimization history.
Formally, prompt difficulty $\gamma_t^{\tau}$ is modeled as $f\left(\tau, \bm{z}_t\right).$

To enable cross-prompt generalization, the framework implicitly relies on a weak structural assumption:
under a fixed LLM policy, semantic representations provide a coarse organization of the prompt space in which difficulty is not entirely arbitrary.
Importantly, this does not require semantic similarity to be a strong or deterministic predictor of difficulty, nor does it imply that prompt embeddings alone can determine difficulty.
Rather, semantic representations serve as a shared coordinate system that supports information transfer across prompts.
Empirically, on DeepScaler, we conduct a simple baseline that estimates difficulty by weighting success rates using embedding similarity achieves a correlation of approximately $0.2$ under a fixed policy.
While this level of correlation is insufficient for accurate difficulty prediction in isolation, it demonstrates that semantic structure provides non-trivial signals.
At the same time, its weakness highlights the necessity of incorporating history-dependent difficulty context, motivating the use of the dynamically updated latent variable $\bm{z}_t$.

Overall, semantic representations in our framework act as a stable but coarse anchor for cross-prompt generalization, while the latent context $\bm{z}_t$ captures the dominant, time-varying factors induced by model optimization.
Their combination enables effective prompt difficulty modeling.
Finally, we emphasize that the specific choice of semantic embedding (e.g., WordLLaMA) is a practical instantiation rather than a conceptual requirement; the framework itself is agnostic to the particular embedding model used.

\section{Theoretical Proof}
\label{appsec:proof}
\shareprediction*

\begin{proof}
The proof follows a standard argument by interpreting conditional expectation as an orthogonal projection in the $L^2$ Hilbert space of random variables. 
We include it to formalize the role of shared history conditioning in our framework.

\paragraph{Step 1: Orthogonality and MSE decomposition.}
We decompose the independent estimation error as:
\begin{equation}
    \gamma_t^\tau - \hat{\gamma}^{\tau, \mathrm{ind}} 
    = (\gamma_t^\tau - \hat{\gamma}^{\tau, \mathrm{shr}}) + (\hat{\gamma}^{\tau, \mathrm{shr}} - \hat{\gamma}^{\tau, \mathrm{ind}}).
\end{equation}
Let 
\[
\epsilon := \gamma_t^\tau - \hat{\gamma}^{\tau, \mathrm{shr}}, 
\qquad 
\delta := \hat{\gamma}^{\tau, \mathrm{shr}} - \hat{\gamma}^{\tau, \mathrm{ind}}.
\]

Since $\hat{\gamma}^{\tau, \mathrm{shr}} = \mathbb{E}[\gamma_t^\tau \mid H_{t-1}]$, we have
\[
\mathbb{E}[\epsilon \mid H_{t-1}] = \mathbb{E}[\gamma_t^\tau - \hat{\gamma}^{\tau, \mathrm{shr}} \mid H_{t-1}] = 0.
\]
Moreover, $\delta$ is fully determined by $H_{t-1}$, so we can write
\begin{equation}
\mathbb{E}[\epsilon \, \delta] = \mathbb{E}\big[ \mathbb{E}[\epsilon \, \delta \mid H_{t-1}] \big] = \mathbb{E}\big[ \delta \cdot \mathbb{E}[\epsilon \mid H_{t-1}] \big] = 0.
\end{equation}
The Pythagorean identity $\|\epsilon + \delta\|^2 = \|\epsilon\|^2 + \|\delta\|^2$ directly yields 
\begin{equation}
\mathcal{R}(\hat{\gamma}^{\tau, \mathrm{ind}}) 
= \mathcal{R}(\hat{\gamma}^{\tau, \mathrm{shr}}) + \mathcal{C}(\tau),
\end{equation}
where $\mathcal{C}(\tau) = \mathbb{E}[\delta^2] \ge 0$.
\paragraph{Step 2: Risk reduction.}
The term $\mathcal{C}(\tau) \ge 0$ represents the variance of the shared predictor unexplained by prompt-specific history.
It is strictly positive if and only if $\hat{\gamma}^{\tau,\mathrm{shr}}$ cannot be determined solely from prompt-specific feedback,
i.e., when the full optimization history provides additional predictive information about $\gamma_t^\tau$.
\end{proof}

\begin{remark}[Realization via Global Latent Context]
The optimal shared estimator $\hat{\gamma}^{\tau, \mathrm{shr}}$ introduced above is generally intractable, as it requires conditioning on the full, high-dimensional optimization history $H_{t-1}$. 
Our framework therefore adopts a \emph{practical surrogate} by encoding global difficulty context into a time-evolving latent variable $\bm{z}_t$.
Specifically, the history-dependent prior $p_{\bm{\eta}}(\bm{z}_t \mid H_{t-1})$ is designed to capture coarse-grained properties of the optimization trajectory, while the shared decoder $p_{\bm{\psi}}(\gamma \mid \tau, \bm{z}_t)$ enables experience sharing across prompts. 
This construction is motivated by the theoretical advantage of shared conditioning, without assuming that the latent representation fully characterizes the optimization history.
In practice,  even a coarse contextual signal can be sufficient to provide a reasonably stable relative ordering for prompt selection, which is the primary objective of our framework.
Empirically, we observe a consistent correlation between predicted and observed prompt difficulty, supporting this behavior.
\end{remark}

\section{Implementation Details}
\label{appsec:implementation}
\subsection{Tasks}

\subsubsection{Mathematics Reasoning}

\paragraph{Training Dataset.}
Following \citet{luo2025deepscaler, gao2025prompt}, we train models on the DeepScaler dataset~\citep{luo2025deepscaler}, which consists of 40,315 competition-level mathematics problems.
We use the public version hosted at \url{https://huggingface.co/datasets/agentica-org/DeepScaleR-Preview-Dataset}.

\paragraph{Evaluation Benchmarks.}
We evaluate mathematical reasoning performance on a suite of benchmarks, including AIME24, AMC23, MATH500~\citep{lightman2023let}, Minerva Math~\citep{lewkowycz2022solving}, and OlympiadBench~\citep{he2024olympiadbench}, using the datasets hosted at \url{https://huggingface.co/datasets/math-ai}.
Training curves report the average performance across all math benchmarks.
To further assess generalization beyond the training distribution, we additionally evaluate on three general reasoning benchmarks: MMLU-Pro~\citep{wang2024mmlu}, ARC-c~\citep{clark2018think}, and GPQA-diamond (GPQA)~\citep{rein2024gpqa}, using datasets provided by LUFFY~\citep{luffy}.

\paragraph{Reward Function.}
Following the default configuration in verl~\citep{sheng2024hybridflow}, we adopt a binary reward scheme that assigns a reward of $1$ to correct responses and $0$ otherwise.

\subsubsection{Logical Reasoning}

\paragraph{Training Dataset.}
We consider the Countdown Number Game, a symbolic reasoning task that requires combining given numbers via basic arithmetic operations to reach a target value.
For training, we use a 20,000-problem subset of the Countdown-34 dataset from \url{https://huggingface.co/datasets/Jiayi-Pan/Countdown-Tasks-3to4}, where each problem provides either three or four source numbers.

\paragraph{Evaluation Benchmarks.}
Evaluation is conducted on two held-out benchmarks: a 512-problem split from Countdown-34 (CD-34), and a 512-problem subset from Countdown-4 (CD-4), a more challenging variant available at \url{https://huggingface.co/datasets/Jiayi-Pan/Countdown-Tasks-4}.
Compared to CD-34, CD-4 consistently provides four source numbers per instance, substantially enlarging the search space and increasing problem difficulty.
Training curves are reported on the average performance on CD-34 and CD-4.

\paragraph{Reward Function.}
Following \citet{tinyzero}, we incorporate a format-aware reward function:
\begin{equation}
r =
\begin{cases}
1   & \text{if the response is correct}, \\
0.1 & \text{if the response is incorrect but properly formatted}, \\
0   & \text{otherwise}.
\end{cases}
\end{equation}

\subsection{Models}

We evaluate five LLMs spanning different architectures and parameter scales.
All models are obtained from their official Hugging Face repositories and used without modification:
\begin{itemize}[leftmargin=10pt]
    \item DeepSeek-R1-Distill-Qwen-1.5B: \url{https://huggingface.co/deepseek-ai/DeepSeek-R1-Distill-Qwen-1.5B};
    \item DeepSeek-R1-Distill-Qwen-7B: \url{https://huggingface.co/deepseek-ai/DeepSeek-R1-Distill-Qwen-7B};
    \item Qwen3-4B-Base: \url{https://huggingface.co/Qwen/Qwen3-4B-Base};
    \item Qwen3-8B-Base: \url{https://huggingface.co/Qwen/Qwen3-8B-Base};
    \item Llama-3.2-3B-Instruct: \url{https://huggingface.co/meta-llama/Llama-3.2-3B-Instruct}.
\end{itemize}

\subsection{Training Details}

We adopt GRPO~\citep{shao2024deepseekmath} as the default RLVR algorithm, implemented within the verl framework~\citep{sheng2024hybridflow}.
At each training step, $k=8$ responses are sampled per prompt to estimate advantages, using temperature $1.0$ and $\texttt{top\_p}=1.0$.
Evaluation is performed using \texttt{pass@1}, computed from multiple independent generations per prompt, where the number of generations varies across benchmarks.
Following \citet{luo2025deepscaler, qu2025prompt}, evaluation generations use temperature $0.6$ and $\texttt{top\_p}=0.95$.
We disable the KL penalty by setting $\beta=0$, consistent with \citet{yu2025dapo}.

The training batch size $\mathcal{B}$ is set to 256 for both DeepScaler and Countdown, with mini-batch sizes of 128 and 64, respectively.
The maximum response length is 8192 tokens for DeepScaler and 1024 tokens for Countdown.
Entropy regularization is applied with a coefficient of 0.001 for Countdown and disabled for DeepScaler.
Optimization is performed using Adam~\citep{kingma2014adam} with learning rates of $1\mathrm{e}{-6}$ for Countdown and $4\mathrm{e}{-6}$ for DeepScaler, following \citet{qu2025prompt, gao2025prompt}, with $\beta=(0.9,0.999)$, and weight decay $0.01$.
We apply the Clip-Higher strategy from DAPO~\citep{yu2025dapo}, which decouples clipping ranges with $\epsilon_{\text{low}}=0.2$ and $\epsilon_{\text{high}}=0.28$.
For Countdown, we apply a post-rollout advantage normalization to stabilize gradient magnitudes under low effective sample ratios.
For each benchmark, training is conducted under a fixed computational budget; in practice, this corresponds to roughly one pass over the dataset, by which point performance has largely stabilized.
All experiments are conducted on 8 NVIDIA H20 GPUs.

Regarding hyperparameters, we set $\lambda=1$ for Countdown and $\lambda=0.5$ for DeepScaler. The candidate batch consists of the entire prompt pool.
Regarding the diversity implementation in Eq.~\eqref{eq:diversity}, we use Euclidean distance in the embedding space as a simple and computationally efficient distance measure.

\subsection{Implementation of the Generalizable Prompt Predictive Model}
\label{appsec:predictor_impl}

We implement the prompt predictive model using a variational formulation optimized via the ELBO in Eq.~\eqref{eq:elbo}, jointly learning the encoder, decoder, and history-conditioned prior.

\paragraph{Input Representation.}
Prompt embeddings are extracted using the \textsc{WordLlama} toolkit~\citep{miller2024wordllama}, selected for its computational efficiency and stable semantic representations.

\paragraph{Encoder and Prior.}
The variational encoder $q_{\bm\phi}(\bm z_t \mid H_t)$ and the history-conditioned prior $p_{\bm\eta}(\bm z_t \mid H_{t-1})$ are implemented using lightweight Transformer encoders.
They operate over short sequences that aggregate prompt embeddings and their associated success rate feedback.
For computational efficiency, the prior conditions on the most recent training batch, while the encoder conditions on the current batch.
This design leverages the recursive structure of the latent formulation:
information from recent history is progressively compressed into the latent state through posterior-to-prior regularization, while longer-term effects are reflected in the evolving parameters of the predictive model as a result of training on preceding data.
Consequently, conditioning on recent observations suffices to capture short-term non-stationarity in the optimization trajectory without explicitly re-encoding the full history at each step.
The Transformer outputs are pooled to parameterize the mean and variance of a Gaussian latent context $\bm z_t$.

\paragraph{Decoder.}
The shared decoder $p_{\bm\psi}(H_t \mid \bm z_t)$ is parameterized as a multilayer perceptron (MLP), which maps the latent context $\bm z_t$ together with the prompt embedding $\tau_t$ to a predicted difficulty $\hat{\gamma}_t^{\tau_t}$.
This shared parameterization enables experience transfer across prompts while keeping the predictive model lightweight.

Overall, this design remains consistent across all experimental settings and LLM backbones, enabling an efficient and stable implementation.
The model introduces negligible computational overhead, with only $\bm{20\text{M}}$ parameters, less than $\bm{1\%}$ of most LLMs.
The detailed computational complexity analysis is deferred to Appendix~\ref{appsec:complexity}.

\paragraph{Discussion.}
The design choices of the predictive model, including the specific variational formulation, Transformer-based encoder, and lightweight MLP decoder, are not claimed to be optimal.
Rather, the primary goal of this work is to validate the feasibility and effectiveness of global, history-conditioned difficulty modeling for prompt difficulty prediction under realistic computational constraints.
Accordingly, we intentionally adopt simple and stable architectures without extensive architecture search or scaling studies.
Exploring alternative model classes, varied predictive model capacities, or more sophisticated temporal modeling strategies may further improve predictive accuracy, but such investigations are orthogonal to the central contributions of this paper and are left for future work.

\subsection{Extension to Test-Time Computation Allocation}
\label{appsec:ttc}
After LLMs post-training, reasoning performance can be further improved by scaling test-time computation~\citep{snell2024scaling, fu2025deepthink}.
A common instantiation is \textit{best-of-$N$} sampling~\citep{cobbe2021training, lightman2023let}, which generates $N$ independent responses per prompt and selects the best one.
Under a binary correctness reward, performance is measured by the $\mathrm{pass@k}$ metric:
\begin{equation}
\mathrm{pass@k}(\tau) = 1 - (1 - \gamma^\tau)^k .
\end{equation}

\paragraph{Necessity of Computation Allocation.}
Typical test-time scaling methods allocate a fixed number of samples to each prompt, i.e., $k(\tau)\equiv k$ for all $\tau \in \mathcal T$.
Under a global computation budget, this is suboptimal since the marginal benefit of additional samples varies across prompts~\cite{damani2024learning,  wang2025optpo}.
Specifically, the marginal gain of one extra sample to prompt $\tau$ is
\begin{equation}
\Delta_k(\tau)
= \mathrm{pass@(k+1)}(\tau) - \mathrm{pass@k}(\tau)
= (1 - \gamma^\tau)^{k} \gamma^\tau .
\end{equation}
As $k$ increases, easy prompts quickly saturate, and extremely hard ones with $\gamma^\tau \to 0$ yield negligible gains.
Challenging yet solvable prompts retain the largest marginal gains, which motivates difficulty-guided computation allocation.

\paragraph{Computation Allocation with Predicted Difficulty.}
Accurately estimating $\gamma^\tau$ at test time is costly and hinders parallelism.
Prior methods propose pre-training predictors~\cite{damani2024learning, snell2024scaling} or sequential pre-sampling~\cite{raman2025abon} to estimate the prompt difficulty.
We reuse the PPM learned during RLVR, providing a new scheme for test-time allocation.
Independent modeling methods such as MoPPS do not generalize to unseen benchmarks and are therefore unsuitable in this setting.

Specifically, we formulate test-time computation allocation as a difficulty quantile-based function.
Given an evaluation set $\mathcal T$ and a global budget
$\sum_{\tau \in \mathcal T} k(\tau) = k \cdot |\mathcal T|$, each prompt is assigned a minimum number of samples $k_{\min}$, and the remaining computation is distributed according to
\begin{equation}
k(\tau) = k_{\min} + \alpha \cdot w\!\left(q(\hat \gamma^\tau), k\right),
\end{equation}
where $q(\hat \gamma^\tau) \in [0,1]$ denotes the empirical quantile of $\hat \gamma^\tau$ within $\{\hat \gamma^{\tau'}\}_{\tau' \in \mathcal T}$,
$w(\cdot)$ is an allocation function defined over quantiles,
and $\alpha$ is determined by the budget constraint.
This formulation depends only on relative ordering and is therefore robust to miscalibration on unseen benchmarks.

Inspired by \citet{damani2024learning}, we adopt a simple window function in this work,
\begin{equation}
w\!\left(q(\hat \gamma^\tau), k\right)
= \mathbbm{1}\!\left[q_1(k) < q(\hat \gamma^\tau) < q_2(k)\right],
\end{equation}
which provides a coarse approximation to the marginal utility structure of $\mathrm{pass@k}$:
prompts that are either too easy or unsolvable receive little computation, while challenging yet solvable prompts are prioritized.

\paragraph{Scope and Implementation Notes.}
We emphasize that the detailed computation allocation strategy is \emph{not} a technical contribution of this work, nor is it intended to be optimal.
Our goal here is solely to demonstrate that a generalizable PPM, once learned during RLVR, can be naturally reused at test time to guide computation allocation.
The specific functional form of $w(\cdot)$ and the choice of hyperparameters are deliberately kept simple and are adopted as a concrete instantiation rather than a carefully optimized design, and primarily serve to instantiate this idea in practice.
The empirical results should therefore be interpreted as evidence of the applicability of generalizable PPMs to test-time computation allocation, rather than as an exhaustive study of optimal allocation strategies.
In practice, we set $k_{\min}=0.1\cdot k$ across all scenarios.
The quantile thresholds are selected via lightweight calibration under low-computation regimes: we evaluate $\mathrm{pass@k}$ over a small range of $k$ values and choose thresholds that yield stable improvements.
While not optimal, this lightweight calibration is to demonstrate the feasibility of difficulty-guided allocation in practice.
Across benchmarks and model sizes, the resulting thresholds $(q_1, q_2)$ typically focus on difficult prompts (e.g., $(0,0.5)$ for CD34 and $(0,0.6)$ for MATH500).
In settings with substantial distribution shift, an interesting direction for future work is to treat the learned PPM as a pretrained initialization and adapt it via lightweight finetuning before test-time allocation.

\section{Extended Experimental Results}
\label{appsec:results}

\subsection{Computational Complexity Analysis}
\label{appsec:complexity}

Table~\ref{tab:complexity} summarizes the per-step runtime of different operations for Countdown and DeepScaler under varied LLM scales.
For reference, we report the per-step average runtime of LLM training and rollout generation under uniform prompt sampling.
Evaluation-based methods such as DS incur additional overhead by performing rollout-based evaluation over enlarged candidate sets; specifically, the DS sampling cost is computed as the rollout multiplicative factor relative to the standard batch size, multiplied by the average rollout time.
In contrast, the proposed \textsf{GPS} performs prompt selection using a lightweight predictive model and does not require any additional LLM inference.
As a result, the extra cost introduced by \textsf{GPS} remains negligible ($<1\%$) compared to the overall cost of LLM training and rollout generation.

Additionally, to assess scalability under large candidate pools, we evaluate GPS on a synthetic prompt set containing $\bm{\mathrm{1M}}$ prompts, with a candidate batch size of the same scale.
In this setting, the combined cost of GPS sampling and PPM updates remains $<16$s, indicating that the overhead grows sublinearly and remains negligible relative to LLM training.
Notably, the current implementation runs the PPM on a single GPU and further acceleration is straightforward via data or model parallelism in large-scale settings.
Moreover, the candidate batch size can be chosen substantially smaller to further reduce computational cost without affecting performance, as shown in Appendix~\ref{appsec:candidate}.

\begin{table*}[ht]
\centering
\caption{Per-step computational cost of different operations across tasks and LLM scales.  
All measurements are taken during RL post-training on 8$\times$H20 GPUs with batch size 256.}
\label{tab:complexity}
\begin{tabular}{lcccc}
\toprule
 & \multicolumn{2}{c}{\textbf{Countdown}} & \multicolumn{2}{c}{\textbf{DeepScaler}} \\
\cmidrule(lr){2-3} \cmidrule(lr){4-5}
\textbf{Component} 
& \textbf{4B} & \textbf{8B}
& \textbf{1.5B} & \textbf{7B} \\
\midrule
LLM Training
& $\sim$72 s & $\sim$116 s 
& $\sim$203 s & $\sim$610 s \\
LLM Rollout
& $\sim$32 s & $\sim$39 s 
& $\sim$157 s & $\sim$290 s \\
DS Sample Cost 
& $\sim$3$\times$32 s & $\sim$3$\times$39 s 
& $\sim$3$\times$157 s & $\sim$3.6$\times$290 s \\
\textsf{GPS} (Sample + PPM Update) 
& $\sim$1 s & $\sim$1 s 
& $\sim$1.6 s & $\sim$1.6 s \\
\bottomrule
\end{tabular}
\end{table*}

\subsection{Additional Evaluation Results}
Table~\ref{tab:cdeval} report the final evaluation results on Countdown.

\begin{table}[htbp]
  \centering
  \caption{Evaluation results on Countdown. ‘+’ denotes finetuning with the corresponding method.
‘Avg.’ reports the average accuracy, and ‘Runtime’ indicates total training time.
\textbf{Bold} and \underline{underlined} values indicate the best and second-best results, respectively.}
\resizebox{0.45\linewidth}{!}{
    \begin{tabular}{rcccc}
    \toprule
    \multicolumn{1}{c}{\multirow{2}[2]{*}{\textbf{Method}}} & \multicolumn{1}{c}{\textbf{CD4}} & \multicolumn{1}{c}{\textbf{CD34}} & \multicolumn{1}{c}{\multirow{2}[2]{*}{\textbf{Avg.}$\uparrow$}} &\multicolumn{1}{c}{\multirow{2}[2]{*}{\textbf{Runtime}$\downarrow$}}\\
          & \multicolumn{1}{c}{\textbf{Avg@16}} & \multicolumn{1}{c}{\textbf{Avg@16}} & & \\
    \midrule
    \textbf{Qwen3-4B} & 1.3   & 3.5   & 2.4 &- \\
    \textbf{+Uniform} & 51.1  & 73.8  & 62.5 & 2.9h\\
    \textbf{+PCL}  & 51.0  & 72.8  & 61.9 & 3.2h\\
    \textbf{+GRESO}  & 53.8  & 73.8  & 63.8 & 4.9h\\
    \textbf{+MoPPS} & 52.9  & 73.9  & 63.4 & 2.8h\\
    \textbf{+DS} & 56.1  & 76.3  & \underline{66.2} & 4.9h\\
    \rowcolor{myhighlight}
    \textbf{+\textsf{GPS} (Ours)} & 57.2  & 76.0  & \textbf{66.6}& 3.4h\\
    \midrule
    \textbf{Qwen3-8B} & 2.1   & 3.9   & 3.0 &- \\
    \textbf{+Uniform} & 52.5  & 73.3  & 62.9 & 4.3h\\
    \textbf{+PCL}  & 53.5  & 74.9  & 64.2 & 5.1h\\
    \textbf{+GRESO}  & 54.1  & 75.1  & 64.6 & 6.2h\\
    \textbf{+MoPPS} & 55.5  & 76.0  & 65.7 & 4.3h\\
    \textbf{+DS} & 58.7  & 78.2  & \underline{68.5} & 6.9h\\
    \rowcolor{myhighlight}
    \textbf{+\textsf{GPS} (Ours)} & 59.4  & 77.9  & \textbf{68.6} &5.0h\\
    \bottomrule
    \end{tabular}%
    }
  \label{tab:cdeval}%
\end{table}%

\subsection{Quality of Difficulty Prediction and Prompt Selection}

To further demonstrate that the generalizable PPM yields more accurate predictions than prompt-specific difficulty modeling,
we compare Spearman’s rank correlation with the representative prompt-specific method MoPPS.
Notably, to isolate prediction quality from sampling-induced distributional effects, all correlations are computed under uniform prompt sampling.
As shown in the top row of Fig.~\ref{fig:corr}, \textsf{GPS} consistently achieves higher correlation than MoPPS.
In contrast, MoPPS struggles to provide meaningful predictions under large-scale training, due to the sparsity of prompt-specific observations.
These results indicate that the proposed generalizable PPM enables more accurate and stable difficulty estimation.

To assess prompt selection effectiveness, we further report the \emph{Effective Sample Ratio} (ESR) and compare against Uniform and the prompt-specific baseline MoPPS.
ESR is defined as the fraction of prompts with non-zero reward variance:
\begin{equation}
\small
\mathrm{ESR}(\mathcal{T}^\mathcal{B}) = \frac{1}{\mathcal{B}} \sum_{\tau \in \mathcal{T}^\mathcal{B} }\bm{1}\!\Big[\mathrm{Var}(\{r^\tau_j\}_{j=1}^k) > 0\Big],
\end{equation}
where $\bm{1}[\cdot]$ is the indicator function.
A higher ESR indicates more informative prompts in the batch.

As shown in the bottom row of Fig.~\ref{fig:corr}, \textsf{GPS} maintains a consistently higher ESR throughout training, which is attributed to the reliable difficulty prediction and unified batch selection strategy.
This indicates that \textsf{GPS} selects more informative prompt batches, which in turn contributes to its superior performance.

\begin{figure*}[ht]
    \centering
    \includegraphics[width=\linewidth]{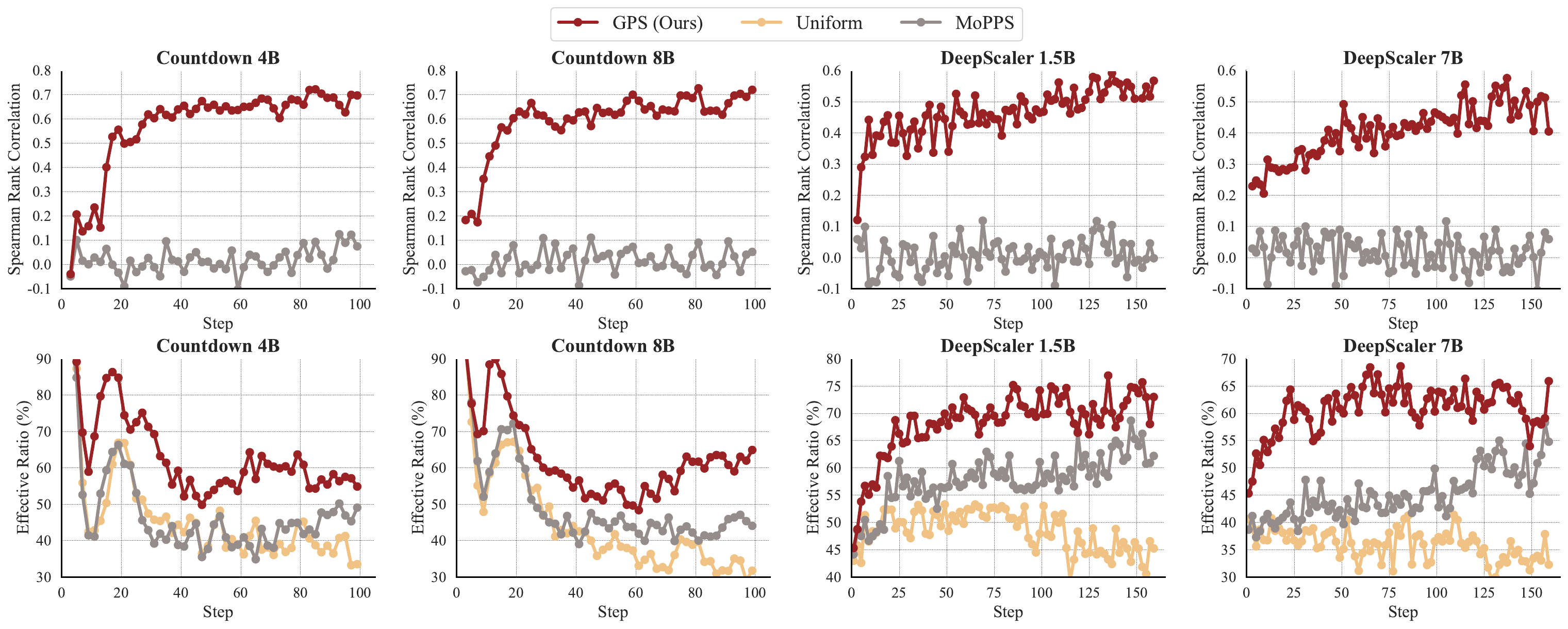}
    \vspace{-15pt}
    \caption{
Difficulty prediction quality and effective sample ratio during training.
The top row shows Spearman’s rank correlation between predicted difficulty and empirical success rate.
The bottom row reports the effective sample ratio achieved by different online prompt selection methods.
}
    \label{fig:corr}
\end{figure*}

\subsection{Performance versus Rollout Number}
To explicitly illustrate the rollout efficiency of \textsf{GPS} relative to evaluation-based methods such as DS,
we plot training performance against the number of generated rollouts.
As shown in Fig.~\ref{fig:performance_rollout},
prediction-based methods such as \textsf{GPS} introduce no additional rollouts beyond those required for training,
whereas evaluation-based method DS incur substantially higher rollout costs due to the evaluation of larger candidate prompt sets.
\textsf{GPS} achieves comparable performance while requiring only \textbf{31\%--34\%} of the rollouts used by DS.
These results highlight the rollout efficiency enabled by prompt difficulty prediction.

\begin{figure*}[ht]
    \centering
    \includegraphics[width=\linewidth]{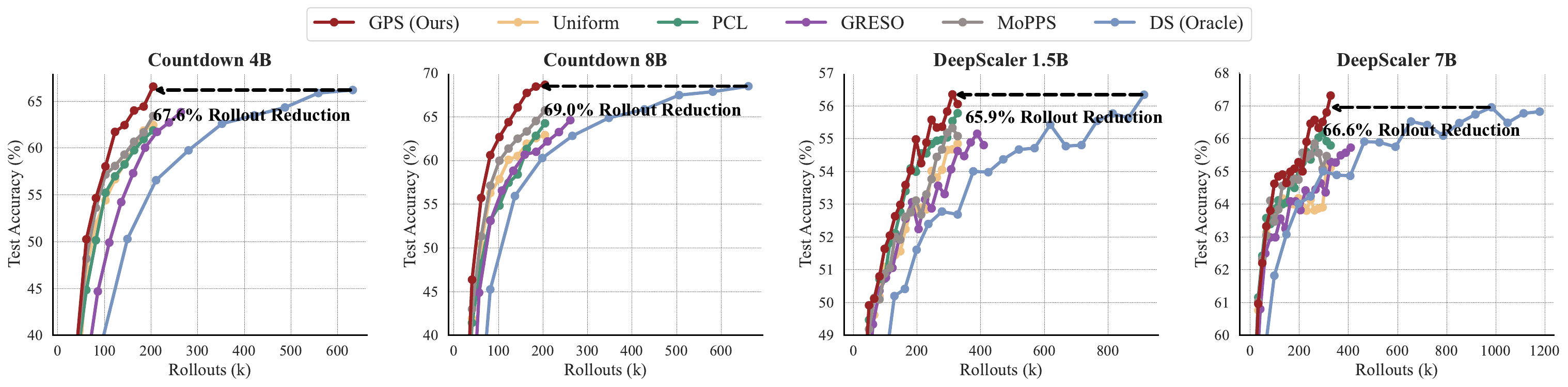}
    \caption{
    Training curves of \textsf{GPS} and baseline methods across different scenarios and backbone models,
    plotted against the number of generated rollouts to reflect computational overhead.
    }
    \label{fig:performance_rollout}
\end{figure*}

\subsection{Applicability to Continuous Process Rewards}
\label{appsec:prime}
Most prior works on online prompt selection focus on tasks with binary rewards, which is also the primary setting studied in this work.
Nevertheless, the proposed prompt predictive model does not rely on the binary reward assumption and is applicable to more general reward functions.
However, an open challenge lies in defining appropriate acquisition criteria for selecting informative prompts when rewards are continuous and process-based.
In contrast to binary rewards, where prompts with intermediate difficulty $\gamma \approx 0.5$ are most informative, there is no canonical target value for continuous rewards.

Inspired by prior studies~\citep{xu2025not, razin2025makes}, we adopt a simple yet effective hypothesis: prompts that exhibit higher reward variance across rollouts provide more informative learning signals.
Notably, this criterion naturally subsumes the binary-reward case, where reward variance is maximized at $\gamma=0.5$.
Consequently, we replace difficulty-based utility $u(\hat{\gamma}^\tau)$ in Eq.~(\ref{eq:batch_utility_final}) with reward variance $\mathrm{Var}(\bm r^\tau)$, prioritizing prompts with greater variance in continuous process rewards.
Correspondingly, the prediction target of the PPM is changed from the success rate $\gamma^\tau$ to the reward variance $\mathrm{Var}(\bm r^\tau)$.

To empirically assess the applicability of \textsf{GPS} beyond binary feedback, we conduct a preliminary study by integrating our method with PRIME~\citep{cui2025process}, which provides continuous process rewards.
As shown in Fig.~\ref{fig:prime}, \textsf{GPS} consistently outperforms uniform prompt selection under continuous process rewards and maintains reliable prediction of reward variance.
These results provide initial evidence that the proposed framework extends beyond binary reward settings and can accommodate more general forms of feedback.

\begin{figure}
    \centering
    \includegraphics[width=0.6\linewidth]{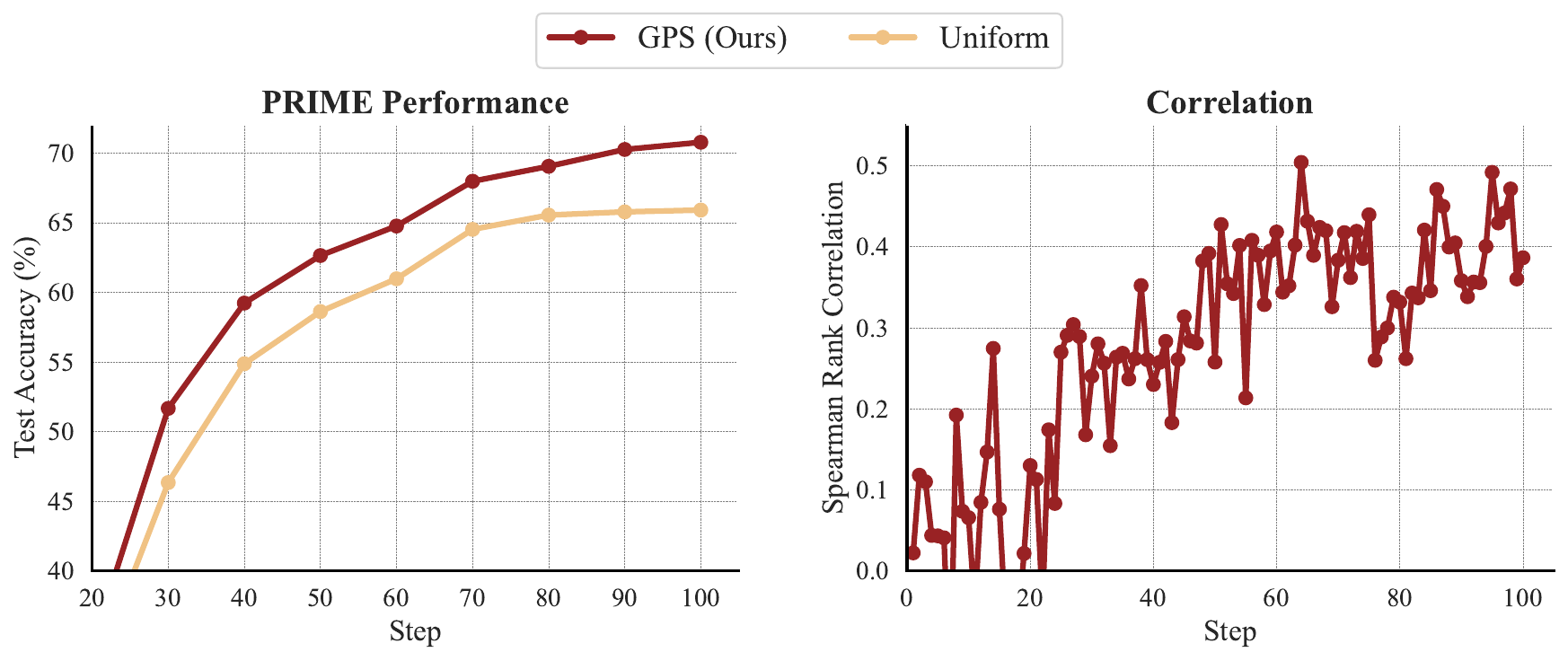}
    \caption{Evaluation on Countdown with PRIME using continuous process rewards. \textsf{GPS} achieves consistent improvements over uniform prompt selection and demonstrates reliable reward-variance prediction, indicating its applicability beyond binary reward settings.}
    \label{fig:prime}
\end{figure}

\subsection{Additional Ablation Study and the Role of History-Anchored Diversity}
\label{appsec:fullablation}

In addition to the ablation study on Countdown presented in the main paper, we further conduct the same set of ablations on the DeepScaler.
As shown in Fig.~\ref{fig:fullablate}, the relative performance trends on DeepScaler closely mirror those observed on Countdown.
In particular, removing history-anchored diversity (\textsf{GPS} w/o hisdiv) results in a pronounced performance degradation, while ablating the latent difficulty context in the generative predictive model (\textsf{GPS} w/o z) also consistently harms performance.
These results confirm that both components contribute robustly across datasets.

To more directly examine the role of history-anchored diversity, we further evaluate its impact on the effective sample ratio.
As shown in Fig.~\ref{fig:hisdivest}, introducing history-anchored diversity consistently increases ESR.
This observation supports the explanation given in the main text: by explicitly encouraging diversity, history-anchored diversity mitigates overfitting of the PPM, leading to more reliable difficulty estimation and more effective prompt selection.

\begin{figure}[t]
    \centering
    \includegraphics[width=0.6\linewidth]{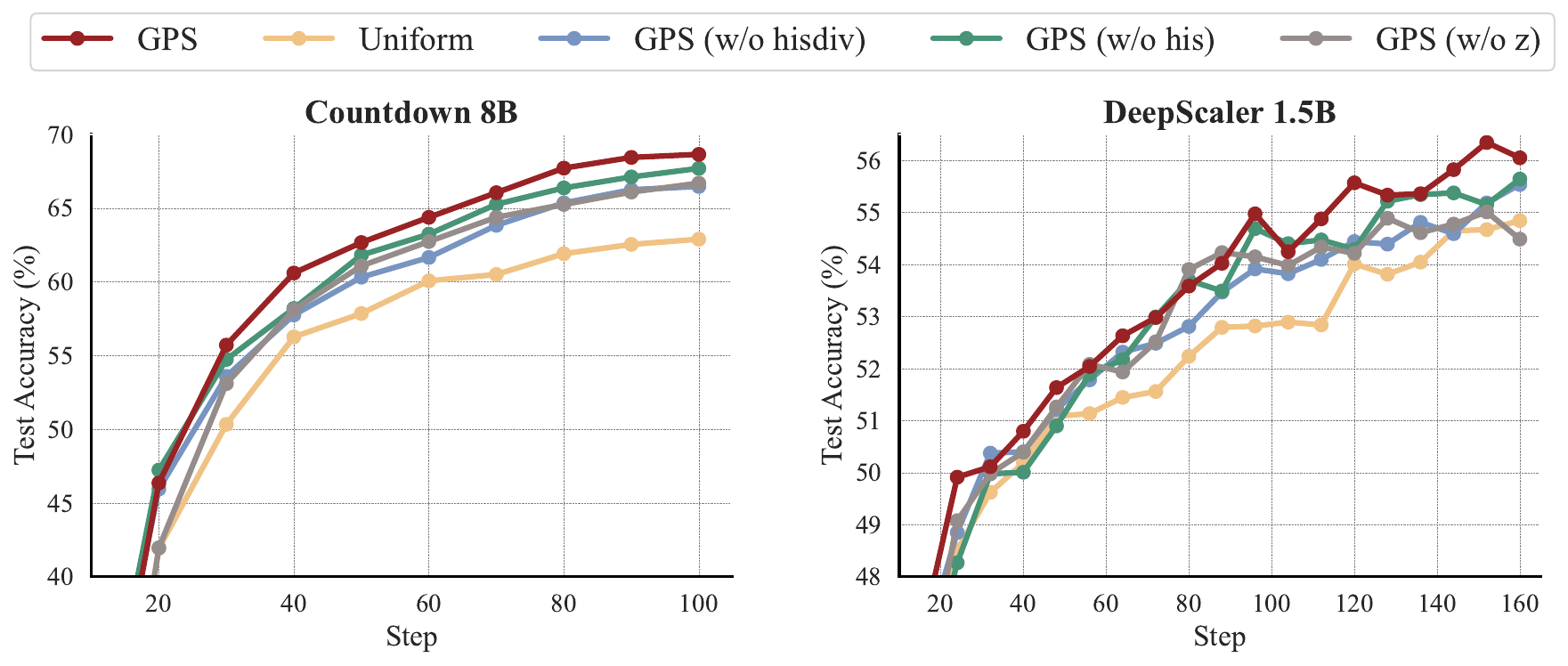}
    \vspace{-5pt}
    \caption{Ablation study of key components in \textsf{GPS} on Countdown and DeepScaler, including the generative predictive model and history-anchored diversity.}
    \label{fig:fullablate}
\end{figure}

\begin{figure}[t]
    \centering
    \includegraphics[width=0.6\linewidth]{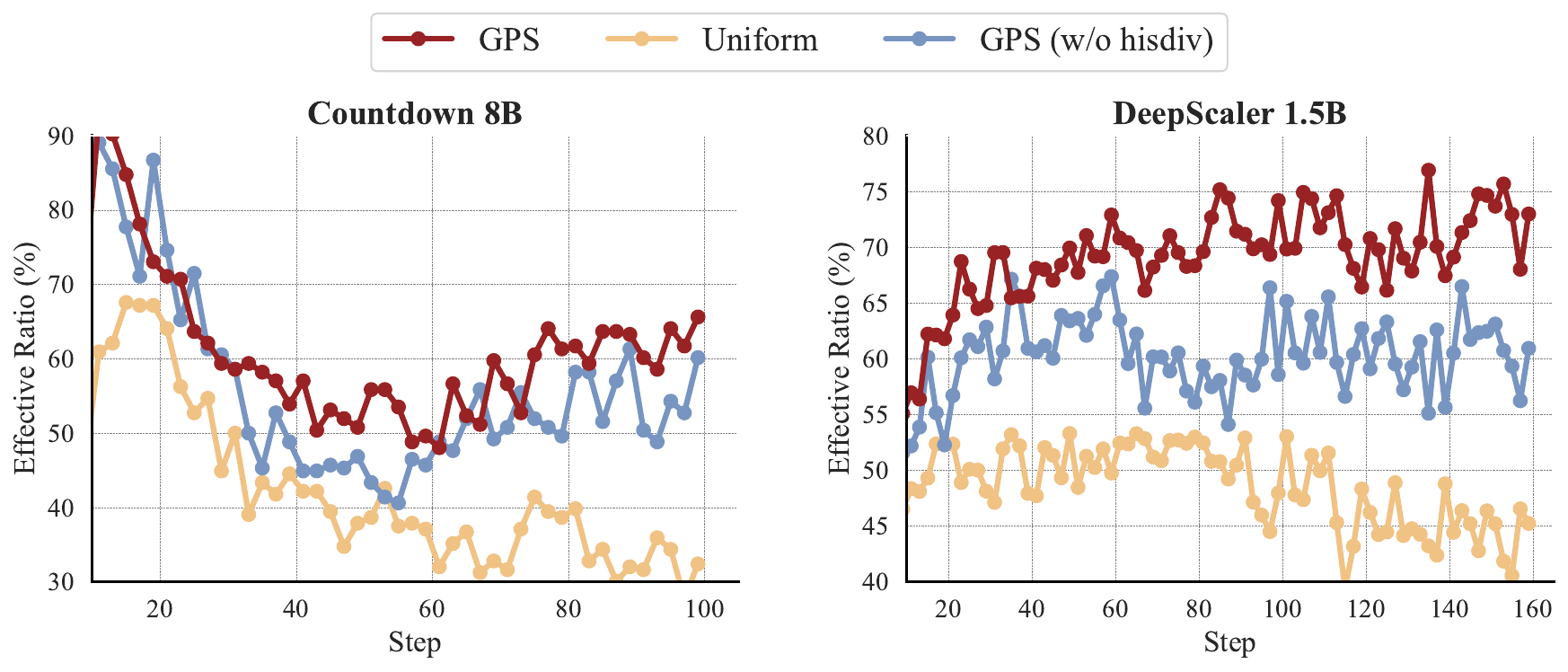}
    \vspace{-5pt}
    \caption{Effect of history-anchored diversity on the effective sample ratio during training.}
    \label{fig:hisdivest}
\end{figure}

\subsection{Evaluation with Different LLM families}
\label{sec:llama}

Beyond the Qwen and DeepSeek series, we evaluate \textsf{GPS} and baseline methods on the Countdown using Llama-3.2-3B-Instruct, an instruction-tuned model from the LLaMA family that differs substantially in architectural design and training paradigm.
As shown in Fig.~\ref{fig:llama}, \textsf{GPS} continues to achieve reliable difficulty prediction and consistently better performance.
These results indicate that the proposed method is not specialized to a particular LLM series, but has the potential of generalizing across heterogeneous model families.

\begin{figure}[t]
    \centering
    \includegraphics[width=0.6\linewidth]{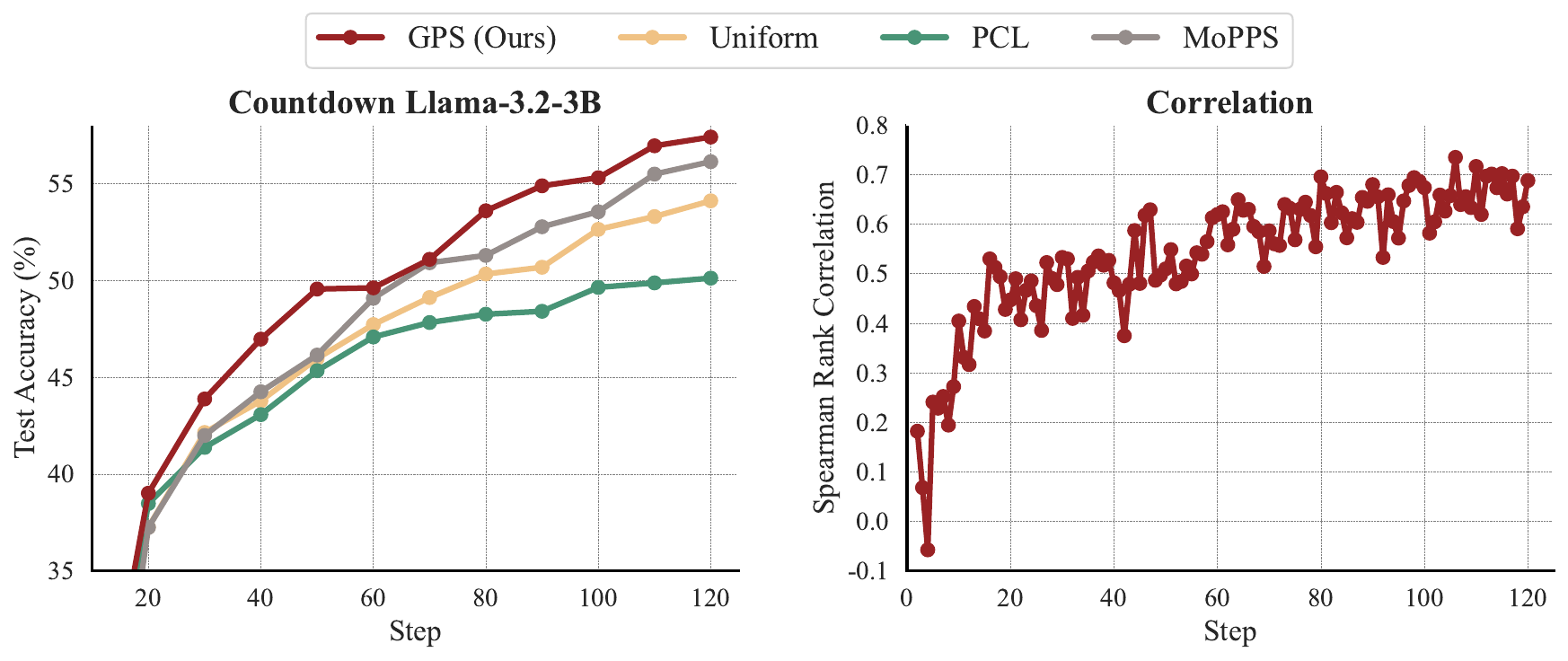}
    \vspace{-5pt}
    \caption{Evaluation on Countdown with Llama-3.2-3B-Instruct, demonstrating the effectiveness of the proposed method across different LLM families.}
    \label{fig:llama}
\end{figure}

\subsection{Training Dynamics}
\label{appsec:dynamics}
Fig.~\ref{fig:response} illustrates the training dynamics of response length, entropy, and training reward under different prompt selection strategies.

\paragraph{Response Length.}
Regarding response length, the proposed method exhibits a trend highly similar to DS, while both consistently produce longer responses than uniform sampling.
This suggests that sampling prompts of intermediate difficulty tends to elicit longer reasoning chains.
This effect also explains the longer wall-clock training time of our method compared to uniform sampling.

\paragraph{Entropy.}
For entropy, no single monotonic pattern emerges across all settings.
In the DeepScaler experiments, both DS and  \textsf{GPS} lead to increased entropy during training, indicating enhanced exploration.
Notably, our method consistently induces a stronger entropy increase, suggesting a more diverse exploration behavior, which we leave for future investigation.

\paragraph{Training Rewards.}
Finally, training rewards align well with expectations.
Compared to uniform sampling, \textsf{GPS} maintains the average reward close to $0.5$ throughout training, reflecting a sustained focus on informative prompts.
This behavior empirically validates the effectiveness of the proposed difficulty estimation and prompt selection strategy.

\begin{figure}
    \centering
    \includegraphics[width=\linewidth]{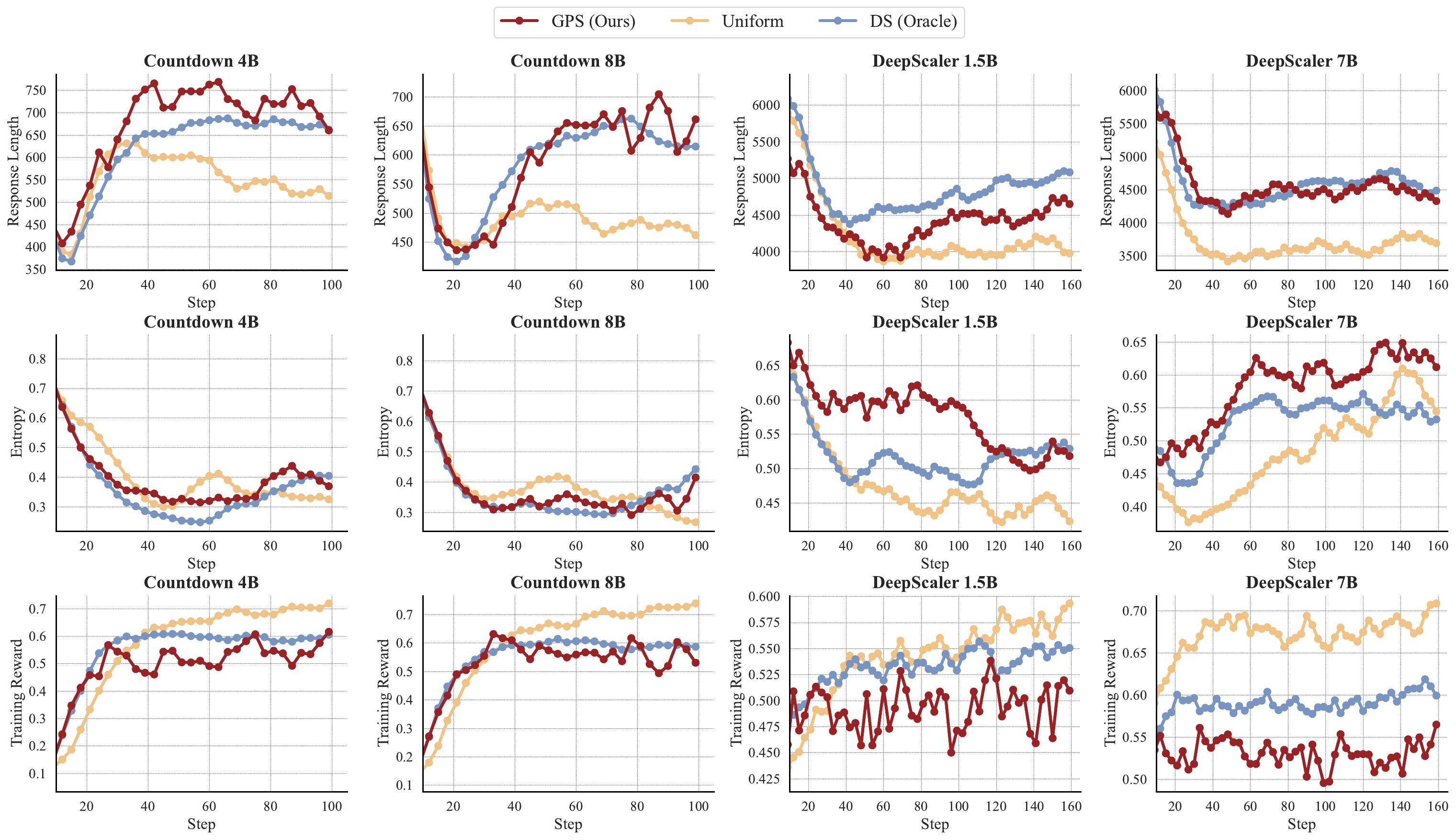}
    \caption{Training dynamics, showing response length, entropy, and training reward throughout training.}
    \label{fig:response}
    \vspace{-10pt}
\end{figure}

\begin{figure}[t]
    \centering
    \begin{subfigure}[t]{0.3\linewidth}
        \centering
        \includegraphics[width=\linewidth]{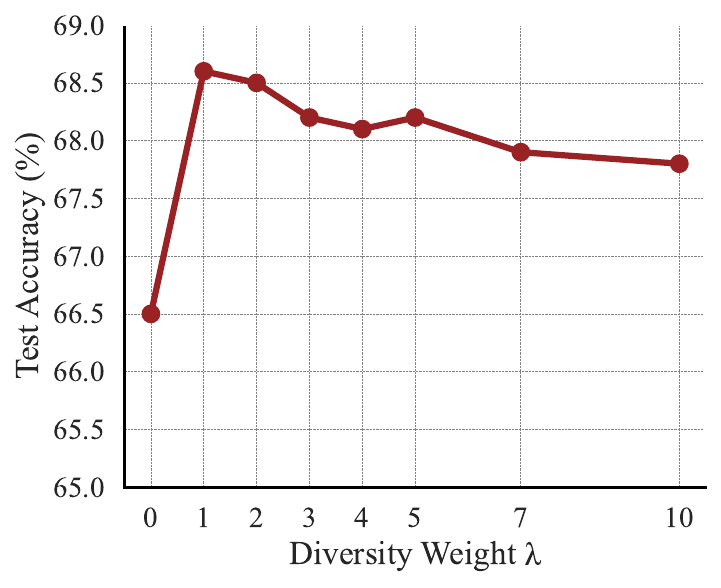}
        \caption{Effect of diversity weight $\lambda$.}
        \label{fig:lambda}
    \end{subfigure}
    \hspace{16pt}
    \begin{subfigure}[t]{0.3\linewidth}
        \centering
        \includegraphics[width=\linewidth]{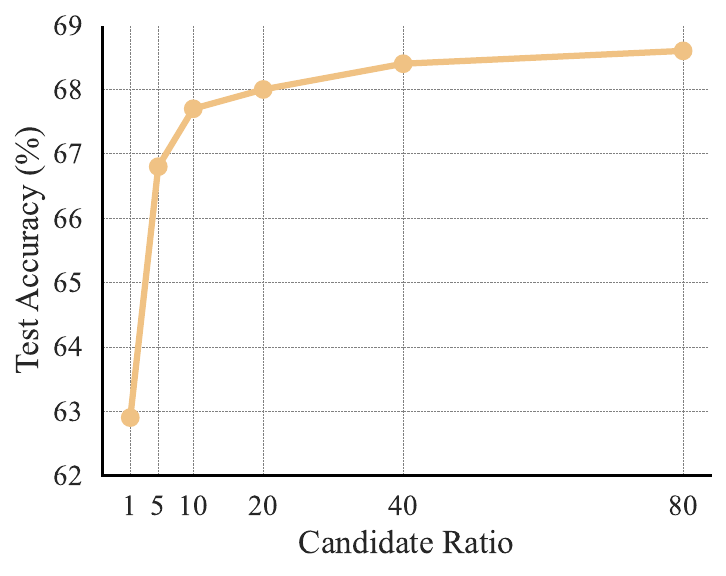}
        \caption{Effect of candidate batch size.}
        \label{fig:candidate}
    \end{subfigure}
    \caption{Hyperparameter sensitivity analysis on Countdown 8B.}
    \label{fig:ablate}
    \vspace{-5pt}
\end{figure}
\subsection{Sensitivity to Diversity Weight $\lambda$}
\label{appsec:lambda}
We study the effect of diversity weight $\lambda$ in \textsf{GPS}, which controls the difficulty-diversity trade-off.
As shown in Fig.~\ref{fig:lambda}, very small values of $\lambda$ underemphasize diversity, resulting in redundant prompt selection and limited coverage of the prompt space.
In contrast, excessively large $\lambda$ over-prioritizes diversity at the expense of utility, leading to degraded training performance.
Importantly, \textsf{GPS} exhibits relatively stable performance over a broad intermediate range of $\lambda$, indicating robustness to the choice of this hyperparameter.

\subsection{Effect of Candidate Batch Size}
\label{appsec:candidate}

We further investigate the effect of the candidate ratio, defined as the ratio between the candidate batch size and the actual batch size used for training.
By default, we use the full prompt pool as the candidate set, which incurs negligible overhead due to the lightweight PPM.
As shown in Fig.~\ref{fig:candidate}, increasing the candidate batch size tends to improve performance, as a larger pool provides more flexibility for prompt batch selection.
The gains gradually saturate once the candidate size exceeds a moderate threshold, suggesting diminishing returns beyond this point.
This trend is consistent with prior observations~\citep{qu2025prompt}.

\subsection{Extension to Mixed-Domain Training}
To demonstrate the robustness and versatility of our method beyond single-domain scenarios, we conduct an additional experiment on mixed-domain training, incorporating both mathematical reasoning and code generation tasks. Specifically, we train the Qwen3-1.7B-Base model on a composite dataset comprising DeepScaler~\cite{luo2025deepscaler} for mathematics and Prime Code~\cite{cui2025process} for programming.
The results in Table~\ref{tab:mathcodeeval} show that GPS consistently outperforms Uniform in both domains, indicating that our method is not limited to narrow tasks, but generalizes effectively to broader, mixed-domain settings including programming tasks.

\begin{table*}[t]
  \centering
  \caption{
Evaluation on mathematics and coding benchmarks.
}
\vspace{-5pt}
  \resizebox{\linewidth}{!}{
    \begin{tabular}{rcccccc|ccc}
    \toprule
    \multicolumn{1}{c}{\multirow{3}{*}{\textbf{Method}}} & \multicolumn{6}{c|}{\textbf{Math}}          & \multicolumn{3}{c}{\textbf{Code}} \\
\cmidrule{2-10}          & \textbf{MATH500} & \textbf{Olympiad.} & \textbf{Minerva.} & \textbf{AMC23} & \textbf{AIME24} & \multirow{2}[1]{*}{\textbf{Avg.}$\uparrow$} & \textbf{CodeContests} & \textbf{Codeforces}  & \multirow{2}[1]{*}{\textbf{Avg.}$\uparrow$} 
\\
          & \textbf{Avg@1} & \textbf{Avg@1} & \textbf{Avg@4} & \textbf{Avg@32} & \textbf{Avg@32} &       & \textbf{Avg@4} & \textbf{Avg@4} &  \\
          \midrule
    \textbf{Uniform} & 65.2  & 24.7  & 25.0  & 36.5  & 6.1  & \underline{31.5}  & 23.8  & 24.8  & \underline{24.3} \\
    \rowcolor{myhighlight}
    \textbf{\textsf{GPS} (Ours)} & 65.6	&28.6	&24.0	&40.6	&11.2	&\textbf{34.0}		&26.9	&26.0	&\textbf{26.5} \\
    \bottomrule
    \end{tabular}%
    }
  \label{tab:mathcodeeval}%
\end{table*}%

\subsection{Sensitivity Analysis across Embedding Models}
\label{sec:sensitivity_embedding}

\textsf{GPS} is orthogonal to the choice of embedding model: the specific embedding is an implementation detail, rather than a prerequisite for our core contribution. To assess the sensitivity of our framework, we evaluate its performance using multiple widely adopted embedding models on Countdown. 

As shown in Table~\ref{tab:embedding_sensitivity}, while the absolute performance varies slightly depending on the embedding used, \textsf{GPS} consistently and significantly outperforms the uniform sampling baseline across all choices. Specifically, although WordLlama achieves the highest accuracy, alternative models such as all-MiniLM-L6-v2 and bge-large-en-v1.5~\citep{bge_embedding} still yield substantial improvements over the baseline. These results indicate that the empirical gains of \textsf{GPS} are robust to the specific embedding choice and fundamentally stem from the proposed generalizable prompt selection framework.

\begin{table}[htbp]
  \centering
  \caption{
    Sensitivity analysis across different embedding models on Countdown with Qwen3-8B model.
  }
  \vspace{2pt}
  \resizebox{0.6\linewidth}{!}{
    \begin{tabular}{l c ccc}
    \toprule
    \multirow{2}[2]{*}{\textbf{Task}} & \multirow{2}[2]{*}{\textbf{Uniform}} & \multicolumn{3}{c}{\textbf{\textsf{GPS} with Various Embeddings}} \\
    \cmidrule(lr){3-5}
          &       & \textbf{WordLlama} & \textbf{all-MiniLM-L6-v2} & \textbf{bge-large-en-v1.5} \\
    \midrule
    Countdown 8B & 62.9  & \textbf{68.6} & 67.9  & 67.7 \\
    \bottomrule
    \end{tabular}%
  }
  \label{tab:embedding_sensitivity}%
\end{table}

\subsection{Applicability beyond Reasoning: Evaluation on RLHF}
\label{sec:appendix_rlhf}

To demonstrate the applicability of \textsf{GPS} beyond reasoning tasks with binary rewards, we evaluate it in an RLHF setting using the HH-RLHF~\cite{bai2022hhrlhf} dataset and Qwen3-1.7B-Base. To accommodate continuous rewards, we adapt our PPM to predict the average continuous reward and employ a quantile-threshold sampling strategy. As shown in Table~\ref{tab:rlhf_eval}, \textsf{GPS} maintains a strong Spearman correlation with empirical rewards. By dynamically prioritizing informative prompts, \textsf{GPS} achieves a $1.24\times$ training speedup and improves final performance compared to the uniform sampling baseline.
While this heuristic proves effective, establishing a theoretically optimal sampling criterion for continuous rewards remains a promising direction for future exploration.

\begin{table}[t]
  \centering
  \caption{
   \textsf{GPS} reliably predicts continuous rewards and significantly accelerates training compared to uniform on RLHF.
  }
  \vspace{2pt}
  \resizebox{0.55\linewidth}{!}{
    \begin{tabular}{l ccc}
    \toprule
    \textbf{Method} & \textbf{Spearman Correlation} & \textbf{Speedup} & \textbf{Performance} \\
    \midrule
    \textbf{Uniform} & --    & $1.00\times$ & 17.4 \\
    \textbf{\textsf{GPS} (Ours)} & 0.7$\sim$0.8  & \textbf{1.24$\times$} & \textbf{18.1} \\
    \bottomrule
    \end{tabular}%
  }
  \label{tab:rlhf_eval}%
\end{table}

\subsection{Discussion on Future Directions}

While the current study focuses primarily on online RL-based post-training, the proposed sampling perspective may extend more broadly to offline preference optimization or off-policy training pipelines~\citep{mao2024offline, dong2024rlhf, mao2024doubly, qu2023hokoff, shao2023counterfactual}, where efficient allocation of optimization effort across data remains an important challenge.
Another important future direction is evaluating the resulting models in more complex downstream environments, particularly agentic and interactive scenarios~\citep{jin2025search, qu2024choices, wang2024llm, qu2025latent}. Beyond standard reasoning benchmarks, such evaluations could provide additional insights into how adaptive sampling strategies influence long-horizon planning, tool use, environment interaction, and practical deployment performance.

\section{Data Examples}
\label{appsec:dataexample}
For the DeepScaler dataset and mathematics benchmarks, prompts are constructed by appending a chain-of-thought instruction~\citep{wei2022chain} together with a formatting constraint:
``\texttt{Let's think step by step and output the final answer within \textbackslash boxed\{\}}''.
For general reasoning benchmarks, we follow the evaluation protocol of \citet{luffy} and adopt PRIME's prompt template.
For the Countdown task, we use the prompt format introduced in \citet{tinyzero}.

\begin{tcolorbox}[example, title=DeepScaler \& Mathematics Benchmarks example]
\textbf{Prompt:}

The operation $\otimes$ is defined for all nonzero numbers by $a \otimes b = \frac{a^{2}}{b}$. Determine $[(1 \otimes 2) \otimes 3] - [1 \otimes (2 \otimes 3)]$. Let's think step by step and output the final answer within \texttt{\textbackslash boxed\{\}}.

\textbf{Ground-Truth Answer:}

$-\frac{2}{3}$
\end{tcolorbox}

\begin{tcolorbox}[example, title=General Reasoning Benchmarks example]
\textbf{Prompt:}

The following are multiple choice questions (with answers) about {\$}. Think step by step and then finish your answer with "\texttt{\textbackslash boxed\{X\}}" where X is the correct letter choice.

Question:

Typical advertising regulatory bodies suggest, for example that adverts must not: encourage \_\_\_\_\_\_\_\_\_, cause unnecessary \_\_\_\_\_\_\_\_ or \_\_\_\_\_, and must not cause \_\_\_\_\_\_\_ offence.

Options:

A. Unsafe practices, Wants, Jealousy, Serious

B. Unsafe practices, Distress, Joy, Trivial

C. Unsafe practices, Distress, Fear, Serious

D. Safe practices, Wants, Jealousy, Trivial

E. Unsafe practices, Wants, Fear, Trivial

F. Safe practices, Wants, Fear, Serious

G. Safe practices, Distress, Fear, Trivial

H. Safe practices, Distress, Jealousy, Serious

I. Safe practices, Fear, Jealousy, Trivial

\textbf{Ground-Truth Answer:}

C
\end{tcolorbox}

\begin{tcolorbox}[example, title=Countdown example]
\textbf{Prompt:}

A conversation between User and Assistant. The user asks a question, and the Assistant solves it. The assistant first thinks about the reasoning process in the mind and then provides the user with the answer.

User: Using the numbers [2, 54, 17], create an equation that equals 35. You can use basic arithmetic operations (+, -, *, /) and each number can only be used once. Show your work in $<$think$>\ </$think$>$ tags. And return the final answer in $<$answer$>\ </$answer$>$ tags, for example $<$answer$> (1 + 2) / 3 <$/answer$>$.

Assistant: Let me solve this step by step.

$<$think$>$

\end{tcolorbox}

\end{document}